\begin{document}
\title{Understanding Adversarial Robustness of Vision Transformers via Cauchy Problem\footnote{W. Ruan is the corresponding author. This work is supported by Partnership Resource Fund (PRF) on Towards the Accountable and Explainable Learning-enabled Autonomous Robotic Systems from UK EPSRC project on Offshore Robotics for Certification of Assets (ORCA) [EP/R026173/1].}}
\titlerunning{Understanding Adversarial Robustness of Vision Transformers via Cauchy Problem}
%
\author{Zheng Wang\inst{1} \and
Wenjie Ruan \Letter \inst{1}}
\tocauthor{Zheng Wang (University of Exeter),
Wenjie Ruan (University of Exeter)}
\authorrunning{Z. Wang et al.}
%

\institute{University of Exeter, Exeter EX4 4PY, UK\\
\email{\{zw360, W.Ruan\}@exeter.ac.uk}}

\toctitle{Understanding Adversarial Robustness of Vision Transformers via Cauchy Problem}

\maketitle              
\begin{abstract}
Recent research on the robustness of deep learning has shown that \textit{Vision Transformers (ViTs)} surpass the \textit{Convolutional Neural Networks (CNNs)} under some perturbations, e.g., natural corruption, adversarial attacks, etc. Some papers argue that the superior robustness of ViT comes from the segmentation on its input images; others say that the \textit{Multi-head Self-Attention (MSA)} is the key to preserving the robustness \cite{naseer2021intriguing}. In this paper, we aim to introduce a principled and unified theoretical framework to investigate such argument on ViT's robustness. 
We first theoretically prove that, unlike \textit{Transformers} in \textit{Natural Language Processing}, ViTs are \textit{Lipschitz} continuous. Then we theoretically analyze the adversarial robustness of ViTs from the perspective of \textit{Cauchy Problem}, via which we can quantify how the robustness propagates through layers. We demonstrate that the first and last layers are the critical factors to affect the robustness of ViTs. Furthermore, based on our theory, we empirically show that unlike the claims from existing research, MSA only contributes to the adversarial robustness of ViTs under weak adversarial attacks, e.g., \textit{FGSM}, and surprisingly, MSA actually compromises the model's adversarial robustness under stronger attacks, e.g., \textit{PGD attacks}. We release our code via \url{https://github.com/TrustAI/ODE4RobustViT}

\keywords{Adversarial Robustness  \and Cauchy Problem \and Vision Transformer}
\end{abstract}

\section{Introduction}

Since \textit{Transformers} have been transplanted from \textit{Natural Language Processing (NLP)} to \textit{Computer Vision (CV)}, great potential has been revealed by \textit{Vision Transformers} for various CV tasks \cite{khan2021transformers}. It is so successful that some papers even argue that CNNs are just a special case of ViTs \cite{cordonnier2019relationship}. Recently, the robustness of ViTs has been studied, for example, some research showed that ViT has superior robustness than CNNs under natural corruptions \cite{paul2021vision}. Very recently, some researchers have also begun to investigate the robustness of ViTs against \textit{adversarial perturbations} \cite{mahmood2021robustness}.

However, existing research on adversarial robustness for ViTs mainly focuses on \textit{adversarial attacks}. The main idea is to adopt the attacks on CNNs to ViTs, e.g., \textit{SAGA} \cite{mahmood2021robustness} and \textit{IAM-UAP} \cite{hu2021inheritance}. Meanwhile, some pioneering studies demonstrate that ViTs are more robust than CNNs against \textit{adversarial patch attacks}, arguing that the \textit{dynamic receptive field} of MSA is the key factor to its superior robustness \cite{naseer2021intriguing}. On the other hand, some others argue that the tokenization of ViTs plays an essential role in adversarial robustness \cite{aldahdooh2021reveal}. While some researchers say the patch embedding method is a critical factor to contribute the adversarial robustness of ViTs \cite{mao2021towards}. However, most existing works concerning the superior robustness of ViTs are purely based on empirical experiments in an ad-hoc manner. A principled and unified theoretical framework that can quantify the adversarial robustness of ViT is still lacking in the community.

In our paper, instead of analyzing the robustness of \textit{Vision Transformer} purely based on empirical evidence, a theoretical framework has been proposed to examine whether MSA contributes to the robustness of ViTs. Inspired by the fact that ViTs and ResNets share a similar structure of residual additions, we show that, ViTs, under certain assumptions, can be regarded as a \textit{Forward Euler} approximation of the underlying \textit{Ordinary Differential Equations (ODEs)} defined as

\begin{equation*}
    \frac{d\bm{x}}{dt} = \mathcal{F}(\bm{x}, t).
\end{equation*}

With this approximation, each block in transformer can be modeled as the nonlinear function $\mathcal{F}(\bm{x})$. Based on the assumption that function $\mathcal{F}(\bm{x})$ is Lipschitz continuous, we then can theoretically bridge the adversarial robustness with the \textit{Cauchy Problem} by first-order \textit{Taylor expansion} of $\mathcal{F}(\bm{x})$. With the proposed theoretical framework, this paper is able to quantify how robustness is changing among each block in ViTs. We also observe that the first and last layers are vital for the robustness of ViTs.

Furthermore, according to our theoretical and empirical studies, different to the existing claim made by Naseer et al. \cite{naseer2021intriguing} that MSA in ViTs strengthens the robustness of ViTs against patch attacks. We show that MSA in ViTs is {\em not always} improving the model's adversarial robustness. Its strength to enhance the robustness is minimal and even compromises the adversarial robustness against strong $L_p$ norm adversarial attacks. In summary, the key contributions of this paper are listed below.

\begin{itemize}
    \item[1.] To our knowledge, this is the first work to formally bridge the gap between the robustness problem of ViTs and the \textit{Cauchy problem}, which provides a principled and unified theoretical framework to quantify the robustness of transformers.
    
    \item[2.] We theoretically prove that ViTs are Lipschitz continuous on vision tasks, which is an important requisite to building our theoretical framework. 

    \item[3.] Based on our proposed framework, we observe that the first and last layers in the encoder of ViTs are the most critical factors to affect the robustness of the transformers.
    
    \item[4.] Unlike existing claims, surprisingly, we observe that MSA can only improve the robustness of ViTs under weak attacks, e.g., {\em FGSM attack}, and it even compromises the robustness of ViTs under strong attacks, e.g., {\em PGD attack}. 
\end{itemize}

\section{Related Work}
\subsection{Vision Transformers and Its Variants}

To the best of our knowledge, the first work using the transformer to deal with computer vision tasks is done by Carion et al. \cite{carion2020end}, since then, it has quickly become a research hotspot, though it has to be pre-trained on a larger dataset to achieve comparable performance due to its high complexity and lack of ability to encode local information. To reduce the model complexity, \textit{DeiT} \cite{touvron2021training} leverages the \textit{Knowledge Distillation} \cite{hinton2015distilling} technique, incorporating information learned by \textit{Resnets} \cite{he2015deep}; \textit{PvT} \cite{wang2021pyramid} and \textit{BoTNet} \cite{srinivas2021bottleneck} adopt more efficient backbones; \textit{Swim Transformer} \cite{liu2103swin} and \textit{DeepViT} \cite{zhou2021deepvit} modifies the MSA. Other variants, e.g., \textit{TNT}, \textit{T2T-ViT}, \textit{CvT}, \textit{LocalViT} and \textit{CeiT} manage to incorporate local information to the ViTs \cite{khan2021transformers}.

\subsection{Robustness of Vision Transformer}

Many researchers focus on the robustness of ViTs against \textit{natural corruptions} \cite{hendrycks2019natural} and empirically show that ViTs are more robust than CNNs \cite{paul2021vision}. The adversarial robustness of ViTs has also been empirically investigated. Compared with CNNs and MLP-Mixers under different attacks, it claims that for most of the white-box attacks, some black-box attacks, and Universal Adversarial Perturbations (UAPs) \cite{moosavi2017universal}, ViTs show superior robustness \cite{naseer2021intriguing}. However, ViTs are more vulnerable to simple FGSM attacks \cite{bhojanapalli2021understanding}. The robustness of variants of ViTs is also investigated and shown that the local window structure in Swim-ViT harms the robustness and argues that positional embedding and the \textit{completeness/compactness} of heads are crucial for performance and robustness \cite{mao2021discrete}.

However, the reason for the superior robustness of ViTs is rarely investigated. Most of the research concentrate on frequency analysis \cite{paul2021vision}. Benz et al. argue that shift-invariance property \cite{benz2021adversarial} harms the robustness of CNNs. Naseer et al. say the flexible receptive field is the key to learning more shape information which strengthens the robustness of ViTs by studying the severe occlusions \cite{naseer2021intriguing}. And yet Mao et al. argue that ViTs are still overly reliant on the texture, which could harm their robustness against out-of-distribution data \cite{mao2021discrete}. Qin et al. investigate the robustness from the perspective of robust features and argue ViTs are insensitive to patch-level transformation, which is considered as non-robust features~\cite{qin2021understanding}.

\subsection{Deep Neural Network via Dynamic Point of View}

The connection between differential equations and neural networks is first introduced by S. Grossberg \cite{grossberg2013recurrent} to describe a continuous additive RNN model. After ResNet had been proposed, new relations appeared that regard forward prorogation as Euler discretization of the underlying ODEs \cite{ruseckas2019differential}. And many variants of ResNets can also be analyzed in the framework of numerical schemes for ODEs, e.g., PolyNet, FracalNet, RevNet and LMResNet \cite{lu2018beyond}. Instead of regarding neural networks as discrete methods, Neural ODE has been proposed \cite{chen2018neural}, replacing the ResNet with its Underlying ODEs, and the parameters are calculated by a black-box ODE solver. However, E. Dupont et al. \cite{dupont2019augmented} argue that neural ODEs hardly learn some representations. In addition to ODEs, PDEs and even SDEs are also involved in analyzing the Neural Network \cite{sun2018stochastic}.

\section{Preliminaries}

The original ViTs are generally composed of \textit{Patch Embedding}, \textit{Transformer Block} and \textit{Classification Head}. We follow the definition from \cite{dosovitskiy2020image}. Let $\bm{x} \in \mathbf{R}^{H\times W\times C}$ stands for the input image. Hence, Each image is divided equally into a sequence of $N=HW/P^2$ patches, and each one is denoted as $\bm{x}_p \in \mathbf{R}^{N\times (P^2 \cdot C)}$. 

\begin{align*}
    \bm{z}_0 &= [\bm{x}_{class},\bm{x}_p^1\bm{E},\bm{x}_p^2\bm{E},...,\bm{x}_p^N\bm{E}]+\bm{E}_{pos}, \\ 
    \bm{z}_{l}^{'} &= MSA(LN(\bm{z}_{l-1})) + \bm{z}_{l-1}, \\
    \bm{z}_{l} &= MLP(LN(\bm{z}_{l}^{'})) + \bm{z}_{l}^{'}, \\
    \bm{y} &= LN(\bm{z}_L^0), 
\end{align*}
where $\bm{E} \in \mathbb{R}^{P^2 \cdot C \times D}, \bm{E}_{pos} \in \mathbb{R}^{(N+1) \times D}$ and $l = 1,2,...,L$. $LN$ denotes \textit{Layer Normalization}, \textit{MSA} is \textit{Multihead Self-Attention} and \textit{MLP} represents \textit{Multilayer Perceptron}. MSA is the concatenation of \textit{Self-Attentions (SA)} before linear transformation by $W^{(O)} \in \mathbb{R}^{D\times D}$ defined by 
\begin{align*}
    MHA := \begin{pmatrix} SA_1 & SA_2 &...& SA_H \end{pmatrix} W^{(O)}, 
\end{align*}
where $H$ is the number of heads and SA is defined by
\begin{align*}
    SA := P\bm{z}W^{(V)} = softmax\bigg(\bm{z}W^{(Q)}W^{(K)T}\bm{z}^T \bigg)W^{(V)}, 
\end{align*}
where $W^{(Q)}, W^{(K)}, W^{(V)} \in \mathbb{R}^{D \times (D/H)}$, and $\bm{z} \in \mathbb{R}^{N \times D}$ defines the inputs of transformers.  
 
\section{Theoretically Analysis}

\subsection{Vision Transformers are Lipschitz}

To model the adversarial robustness to Cauchy Problem, we first prove that ViTs are Lipschitz functions. Unlike the conclusion drawn by Kim et al. \cite{kim2021lipschitz} that Dot-product self-attention is not Lipschitz, it can be proved that Vision Transformers are Lipchitz continuous since inputs are bounded between $[0, 1]$. We follow the same definition from \cite{kim2021lipschitz} that a function $f: \mathcal{X} \to \mathcal{Y}$ is called Lipschitz continuous if $\exists K \geq 0$ such that $\forall \bm{x}\in \mathcal{X}, \bm{y}\in \mathcal{Y}$ we have

\begin{equation}
    d_{\mathcal{Y}}(f(\bm{x}), f(\bm{x}_0)) \leq K d_{\mathcal{X}}(\bm{x}, \bm{x}_0),
\end{equation}
where $(\mathcal{X}, d_{\mathcal{X}}), (\mathcal{Y}, d_{\mathcal{Y}})$ are given metric spaces, and given $p$-norm distance, the Lipschitz constant $K$ is given by 
\begin{equation}
    Lip_p(f) = \sup_{\bm{x} \neq \bm{x}_0}\frac{\lVert f(\bm{x}) - f(\bm{x}_0)\rVert_p}{\lVert \bm{x} - \bm{x}_0 \rVert_p}.
\end{equation}
Similar to the analysis by Kim et al. \cite{kim2020torchattacks}, since Linear transformation by $W^{(V)}$ is Lipchitz and does not impact our analysis, we will drop it and focus on the non-linear part of $P\bm{z}$.   

Since Patch embeddings are conducted by convolutional operations and the classification heads are fully connected layers, they are Lipchitz continuous \cite{kim2020torchattacks}. Therefore as long as the transformer blocks are Lipschitz continuous, ViTs are Lipschitz continuous because the composite Lipchitz functions, i.e., $f\circ g$, are also Lipschitz continuous \cite{federer2014geometric}. To this end, we have the Theorem \eqref{thm_lip}.

\begin{theorem}\label{thm_lip}
(\textit{Transformer Blocks in ViTs are Lipschitz continuous})

Given vision transformer block with trained parameters $\bm{w}$ and convex bounded domain $\mathcal{Z}_{l-1} \subseteq \mathbb{R}^{N \times D}$, we show that the transformer block $\mathcal{F}_l: \mathcal{Z}_{l-1} \to \mathbb{R}^{N\times D}$ mapping from $z_{l-1}$ to $z_l$ is Lipchitz function for all $l = 1,2,...,L$. 
\end{theorem}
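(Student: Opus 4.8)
The plan is to build the Lipschitz constant of $\mathcal{F}_l$ out of those of its constituents, using two standard facts on metric spaces that were already invoked in the excerpt: a sum $f+g$ of $K_f$- and $K_g$-Lipschitz maps is $(K_f+K_g)$-Lipschitz, and a composition $f\circ g$ is $(K_fK_g)$-Lipschitz. Writing the block as
\begin{equation*}
\mathcal{F}_l \;=\; \bigl(\mathrm{id} + MLP\circ LN\bigr)\circ\bigl(\mathrm{id} + MSA\circ LN\bigr),
\end{equation*}
it then suffices to show that each of $\mathrm{id}$, $LN$, $MLP$ and $MSA$ is Lipschitz on the bounded (convex, after possibly enlarging it to an enclosing ball) set on which it is actually evaluated, and to track that the reachable set of $\bm{z}_l'=MSA(LN(\bm{z}_{l-1}))+\bm{z}_{l-1}$ stays bounded — which holds because $MSA\circ LN$ sends the bounded domain $\mathcal{Z}_{l-1}$ into a bounded set and adding the identity preserves boundedness. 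The identity is trivially $1$-Lipschitz.

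For $LN$, I would bound its Jacobian directly: layer normalisation is smooth away from a vanishing denominator, and the numerical-stability constant $\epsilon>0$ forces $\sqrt{\sigma^2+\epsilon}\ge\sqrt{\epsilon}$, so the Jacobian norm is bounded globally by a constant depending only on $\epsilon$ and the learned scale $\gamma$; the mean-value inequality on the convex domain then yields a global Lipschitz bound. For $MLP$, I would write it as an alternating composition of affine maps — each Lipschitz with constant the operator norm of the corresponding weight matrix — and pointwise activations such as ReLU or GELU, which are globally Lipschitz (with constant $1$, resp.\ a small constant); the composition rule closes the argument, and no boundedness is needed here.

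The genuine obstacle is $MSA$, which is exactly where the global non-Lipschitzness identified by Kim et al.\ would otherwise appear. Following the paper I drop $W^{(V)}$ and the output map $W^{(O)}$ (both linear, hence harmless) and treat a single head as $\bm{z}\mapsto P(\bm{z})\,\bm{z}$ with $P(\bm{z})=softmax\!\bigl(\bm{z}W^{(Q)}W^{(K)T}\bm{z}^{T}\bigr)$. On the bounded set $\{\lVert\bm{z}\rVert\le R\}$: (i) the logit map $\bm{z}\mapsto\bm{z}W^{(Q)}W^{(K)T}\bm{z}^{T}$ is bilinear, hence $C^\infty$ with Jacobian of norm $O\!\bigl(R\,\lVert W^{(Q)}\rVert\,\lVert W^{(K)}\rVert\bigr)$ there — this is the one and only place the boundedness hypothesis is essential; (ii) row-wise $softmax$ is globally Lipschitz, since its Jacobian $\mathrm{diag}(p)-pp^{T}$ has norm at most $1$. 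Composing (i) and (ii) makes $\bm{z}\mapsto P(\bm{z})$ Lipschitz with an explicit constant, while $P(\bm{z})$ is uniformly bounded in operator norm because it is row-stochastic. The product estimate
\begin{equation*}
\lVert P(\bm{z})\bm{z}-P(\bm{z}_0)\bm{z}_0\rVert \;\le\; \lVert P(\bm{z})-P(\bm{z}_0)\rVert\,\lVert\bm{z}\rVert + \lVert P(\bm{z}_0)\rVert\,\lVert\bm{z}-\bm{z}_0\rVert
\end{equation*}
then bounds one head by $O\!\bigl(R\cdot\mathrm{Lip}(P)+1\bigr)$, and concatenating the $H$ heads and applying $W^{(O)}$ only rescales by constants.

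Assembling: $MSA\circ LN$ is Lipschitz on $\mathcal{Z}_{l-1}$, so $\mathrm{id}+MSA\circ LN$ is Lipschitz there with bounded image; $MLP\circ LN$ is globally Lipschitz, so $\mathrm{id}+MLP\circ LN$ is Lipschitz on that image; composing the two gives the claimed (explicit but loose) Lipschitz constant for $\mathcal{F}_l$, uniformly in $l$. I expect the fiddly parts to be (a) verifying the $LN$ Jacobian bound with the correct dependence on $\epsilon$ and $\gamma$ under the exact per-token normalisation, and (b) propagating the radius $R$ and its constants cleanly through the first residual branch into the domain seen by the second — routine, but bookkeeping-heavy.
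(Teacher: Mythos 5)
Your proposal is correct, and it reaches the same essential conclusion by the same underlying principle — boundedness of the input domain is what tames the quadratic attention logits, which is precisely where Kim et al.'s global non-Lipschitzness would otherwise bite — but the execution of the key step differs from the paper's. The paper reduces immediately to the non-linear map $f(\bm{z}) = softmax(a\bm{z}\bm{z}^T)\bm{z} = P\bm{z}$ (in the main text only for $H = D = 1$, deferring the general case to the appendix), computes the full Jacobian $J_f$ of this product explicitly — obtaining a closed form $a\{diag(\bm{z})Pdiag(\bm{z}) - Pdiag(\bm{z})diag(\bm{\mu}) + diag(\bm{\sigma}^2)\} + P$ with a probabilistic reading of $P$ as a Markov transition matrix and $\bm{\mu}, \bm{\sigma}^2$ as its per-row means and variances — and then invokes the mean value inequality on the convex bounded domain. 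You instead avoid differentiating the product $P(\bm{z})\bm{z}$ altogether: you split via $\lVert P(\bm{z})\bm{z}-P(\bm{z}_0)\bm{z}_0\rVert \le \lVert P(\bm{z})-P(\bm{z}_0)\rVert\lVert\bm{z}\rVert + \lVert P(\bm{z}_0)\rVert\lVert\bm{z}-\bm{z}_0\rVert$, get $\mathrm{Lip}(P)$ by composing the bilinear logit map (where boundedness enters) with the $1$-Lipschitz row-wise softmax, and use row-stochasticity to bound $\lVert P\rVert$. Your route buys a cleaner, more modular argument that handles arbitrary $H$ and $D$ without extra work, and it makes explicit the bookkeeping the paper leaves to a citation — the Lipschitzness of $LN$ (via the $\epsilon$-regularised denominator) and $MLP$, the residual additions, and the propagation of bounded reachable sets through the two branches. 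What you lose relative to the paper is the explicit Jacobian formula, which the paper reuses downstream: Theorem 2 and the experiments hinge on $\sigma_{max}(J_{\mathcal{F}})$, so the closed-form Jacobian is not just a proof device there but the quantity actually measured. Both arguments are sound; no gap.
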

\begin{proof}
For simplicity, we only prove the case that the number of heads $H$ and the dimension of patch embedding $D$ are all equal to $1$. The general case can be found in Appendix.    

Because the composition of the transformer block includes an MLP layer that is Lipchitz continuous, as argued by Kim et al. \cite{kim2020torchattacks}, it is the non-linear part of MSA that need to be proved Lipchitz continuous. We formulated the non-linear part as mapping $f:\mathcal{Z} \to \mathbb{R}^{N\times 1}$ shown in Equation \eqref{eq:MSA} 
\begin{equation}
    \begin{split}
    f(\bm{z}) &= softmax(a\bm{zz}^T)\bm{z} = P\bm{z}
    = \begin{pmatrix} 
    p_1(\bm{u}_1) & \cdots & p_N(\bm{u}_1)  \\ 
    \vdots & \ddots &\vdots \\
    p_1(\bm{u}_N) & \cdots & p_N(\bm{u}_N) \\
    \end{pmatrix}\bm{z} \\ \label{eq:MSA}            
    \end{split}
\end{equation}
where $a = W^{(Q)}W^{(K)T} \in \mathbb{R}$, $\bm{z} \in \mathcal{Z}$ which is a bounded convex set and belongs to $\mathbb{R}^{N\times 1}$, $P$ is defined by \textit{softmax} operator. Each row in $P$ defines a discrete probability distribution. Therefore $P$ can be regarded as the \textit{transition matrix} for a finite discrete \textit{Markov Chain} with ${z_1,...,z_n}$ as observed value for random valuables. Since $f$ has continuous first deviates, \textit{Mean Value Inequality} can be used to find Lipchitz constant. Let $\bm{z}, \bm{z}_0 \in \mathcal{Z}$ and $\lVert \cdot \rVert_p$ denote the $p$-norm distance for vectors and \textit{induced norm} for matrices. Specifically, when $p = 2$, the \textit{induce norm} coincides with \textit{spectral norm}, then we have 
\begin{align}
    \lVert f(\bm{z}) - f(\bm{z}_0) \rVert_2 \leq \lVert J_f(\bm{\xi}) \rVert_2 \lVert(\bm{z} - \bm{z}_0)\rVert_2,  \label{eq:lemma1_MVT}
\end{align}
where $\xi \in \mathcal{Z}$ is on the line through $\bm{x}$ and $\bm{x}_0$, and $J_f(\cdot)$ denotes the \textit{Jacobian} of $f$. As long as the Jacobian $J_f$ is bounded for $\mathcal{Z}$, $f$ is Lipschitz continuous. The Jacobian $J_f$ is shown in Equation \eqref{eq:lemma1_J} (see detail in Appendix).    
\begin{equation}
\begin{split}
        J_f(\bm{z}) &= a\big\{diag(\bm{z})Pdiag(\bm{z}) - Pdiag(\bm{z})diag(\bm{\mu}) + diag(\bm{\sigma}^2) \big\} + P, \label{eq:lemma1_J}
\end{split}    
\end{equation}

where $\bm{\mu} = P\bm{z}$ define the mean vector for the Finite Markov Chain and the \textit{variance} are defined by 
\begin{equation}
\bm{\sigma}^2 = \begin{pmatrix}
                \sum_{k=1}^{N}p_k(\bm{u}_1)x_k^2 - \big(\sum_{k=1}^{N}p_k(\bm{u}_1)x_k\big)^2 \\
                \vdots \\
                \sum_{k=1}^{N}p_k(\bm{u}_N)x_k^2 - \big(\sum_{k=1}^{N}p_k(\bm{u}_N)x_k\big)^2 \\
\end{pmatrix} = \begin{pmatrix}
                \sigma_1^2 \\
                \vdots \\
                \sigma_N^2
\end{pmatrix},    
\end{equation}
Since every component on the right-hand-side in \eqref{eq:lemma1_J} is bounded since $\bm{z}$ is bounded. We conclude that $J_f(\bm{z})$ is also bounded, therefore the Lipchitz continuous. 
\end{proof}
\begin{remark}
The use of the Mean Value Theorem requires the domain $\mathcal{Z}$ to be convex, however as long as $\mathcal{Z}$ is bounded, we can always find a larger convex set $\mathcal{Z'} \supseteq \mathcal{Z}$.     
\end{remark}

\begin{remark}
Different from the conclusion drawn by Kim et al. \cite{kim2020torchattacks} that the transformer is not Lipschitz continuous, ViTs are Lipschitz continuous due to the bounded input. 
\end{remark}

\subsection{Model Adversarial Robustness as Cauchy Problem}

Since there exists the \textit{Residual Structure} in the \textit{Transformer Encoder}, just like \textit{ResNet}, which can be formulated as \textit{Euler Method} \cite{lu2018beyond}, the \textit{forward propagation} through Transformer Encoder can also be regarded as a \textit{Forward Euler Method} to approximate the underlying \textit{Ordinary Differential Equations (ODEs)}. 

Let $f: \mathcal{X} \to \mathcal{Y}$ denote the ViTs, where $\mathcal{X} \subseteq \mathbb{R}^n$ denotes the input space and $\mathcal{Y} = \{1,2,...,C\}$ refers to the labels, and $\mathcal{F}_i, i=1,...,L$ denote the basic blocks. Notice that for simplicity, let $\mathcal{F}_1(\bm{x}_0; \bm{w}_0) + \bm{x}_0$ refer to the patch embedding and $\mathcal{F}_L(\bm{x}_{L-1}; \bm{w}_{L-1}) + x_{L-1}$ be the classification head, the rest are transformer blocks. Hence, the forward propagation can be described in Equation \eqref{eq:resnetlike}. 

\begin{equation}
\begin{cases}
    \bm{x}_{k} &= \mathcal{F}_{k}(\bm{x}_{k-1}; \bm{w}_{k-1}) + \bm{x}_{k-1}, k=1,...,L  \\
    \bm{y}_{logit} &= softmax(LP(\bm{x}_{L})) \\
    \bm{y} &= \arg\max_{\mathcal{Y}} \bm{y}_{logit}  ,
    \label{eq:resnetlike}
\end{cases}
\end{equation}
where $\bm{x}_{0} \in \mathcal{X}$, $LP(\cdot)$ stands for \textit{Linear Projection}, $\bm{y}_{logit}$ shows the likelihood for each class and $\bm{y} \in \mathcal{Y}$ denote the classification result. As argued by Liao, et al. \cite{liao2016bridging}, the Transformer blocks in Equation \eqref{eq:resnetlike} can be regarded as \textit{Forward Euler} approximation of the underlying ODE shown below.

\begin{equation}
    \frac{d}{dt}\bm{x}(t) = \mathcal{F}(\bm{x}, t), t \in [t_0, T]
    \label{eq:resnetlikeODE}
\end{equation}
where $\mathcal{F}(\cdot)$ corresponds to the basic blocks in ViTs and $t \in [t_0, T]$ refers to the continuous indexing of those blocks.      

The backward-propagation of Equation \eqref{eq:resnetlikeODE} can be regarded as an estimation problem for parameters $\bm{w}$ of given boundary conditions defined by $\mathcal{X}$ and $\mathcal{Y}$, which leads to Neural ODEs \cite{chen2018neural}.   
   
Before the main theorem that models the adversarial robustness as Cauchy problem, we first define the adversarial robustness metrics. Given neural network $f$, and the fixed input $\bm{x} \in \mathcal{X}$, the \textit{local Adversarial Robustness} proposed by Bastani et al. \cite{bastani2016measuring} is defined as

\begin{equation*}
    \rho(f, \bm{x}) \overset{def}{=} \inf\{ \epsilon > 0 | \exists \hat{\bm{x}}: \lVert \hat{\bm{x}}-\bm{x} \rVert \leq \epsilon, f(\hat{\bm{x}}) \neq f(\bm{x}) \},
\end{equation*}

where $\lVert \cdot \rVert$ defines the general $L_p$ norm. Usually, $p$ is taken as $1,2$ and $\infty$. The adversarial robustness is defined as the minimum radius that the classifier can be perturbed from their original corrected result. As illustrated in figure \eqref{fig:illustration_rho}, considering the the fact that even in the final laryer $\Delta \bm{x}(T)_1 < \Delta \bm{x}(T)_2$, it is still possible that $softmax(LP(\bm{x}(T) + \Delta \bm{x}(T)_1))$ has been perturbed but $softmax(LP(\bm{x}(T) + \Delta \bm{x}(T)_2))$ is not, we use the minimal distortion to define the robustness as 

\begin{equation} \label{eq:robustmetric_star}
    \rho^{\star}(f,\bm{x}) \overset{def}{=} \inf_{\lVert \hat{\bm{x}}(T) - \bm{x}(T) \rVert}\rho(f, \bm{x}),
\end{equation}
where $\hat{\bm{x}}(T) - \bm{x}(T) = \Delta \bm{x}(T)$. 

\begin{figure}[!t]
    \centering
    \includegraphics[width=0.8\textwidth]{./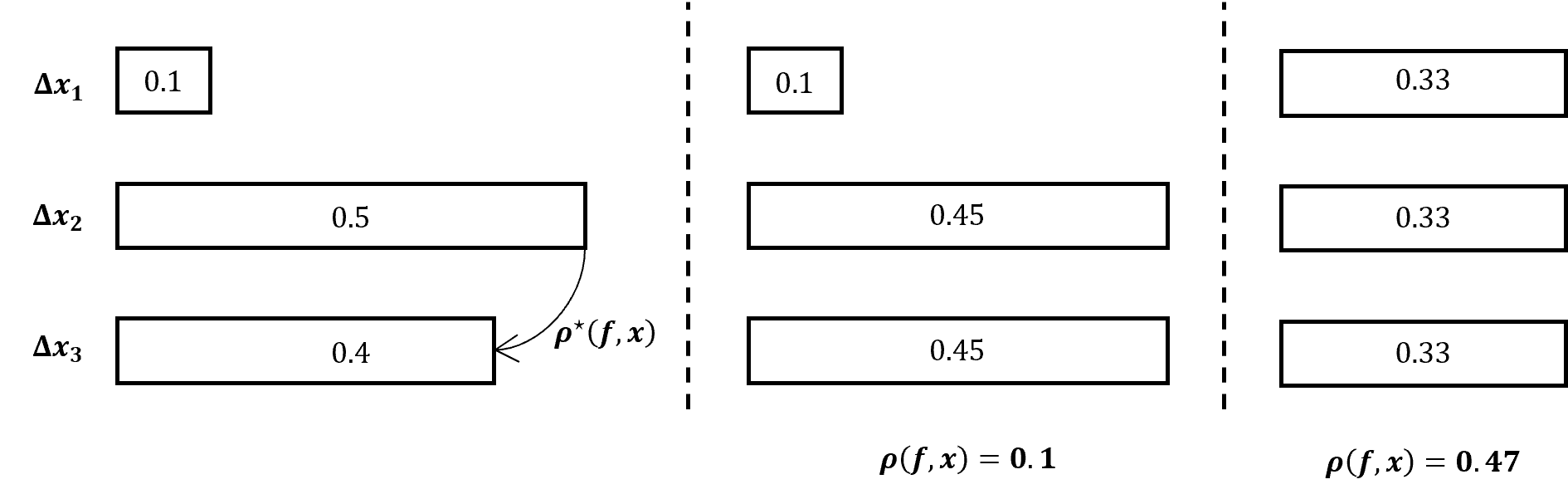}
    \caption{Illustration of $\rho^{\star}(f,x)$. For better illustration $L_1$-norm is taken while calculating the $\rho(f,x)$.}
    \label{fig:illustration_rho}
\end{figure}

\begin{lemma} \label{lemma_solution}
(\textit{Existence and Uniqueness for the Solution of Underlying ODE}) Since the continuous mapping $\mathcal{F}$ defined in ODE \eqref{eq:resnetlikeODE} satisfies the Lipschitz condition on $\bm{z} \in \mathcal{Z}$ for $t \in [t_0, T]$ as claimed in Theorem \eqref{thm_lip}, where $\mathcal{Z}$ is a bounded closed convex set. There exists and only exists one solution for the underlying ODE defined in \eqref{eq:resnetlikeODE}.  
\end{lemma}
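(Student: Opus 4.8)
The plan is to recognize Lemma~\ref{lemma_solution} as an instance of the classical Picard--Lindel\"of (Cauchy--Lipschitz) theorem and to prove it through the Banach fixed-point theorem. First I would record precisely what Theorem~\eqref{thm_lip} supplies: on the bounded closed convex set $\mathcal{Z}$, and for every $t \in [t_0, T]$, the block map $\mathcal{F}(\cdot, t)$ is Lipschitz with a constant that may be taken uniform in $t$ --- there are only finitely many blocks and the continuous index $t$ merely interpolates between them --- say $Lip_2(\mathcal{F}(\cdot,t)) \le K$ for all $t$; moreover $\mathcal{F}$ is continuous on $\mathcal{Z}\times[t_0,T]$, hence bounded there, $\lVert \mathcal{F} \rVert \le M$. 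These are exactly the hypotheses needed for local existence and uniqueness.

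Next I would pass to the equivalent integral formulation: a $C^1$ curve $\bm{x}(\cdot)$ solves \eqref{eq:resnetlikeODE} with $\bm{x}(t_0) = \bm{x}_0$ if and only if it solves
\begin{equation*}
    \bm{x}(t) = \bm{x}_0 + \int_{t_0}^{t} \mathcal{F}(\bm{x}(s), s)\, ds .
\end{equation*}
I would define the Picard operator $\mathcal{T}$ by the right-hand side on the Banach space $C([t_0, t_0+\delta];\mathbb{R}^{N\times D})$ equipped with the sup-norm, restricted to the closed ball of radius $M\delta$ around the constant curve $t \mapsto \bm{x}_0$ (which, after enlarging $\mathcal{Z}$ to a slightly larger convex set as in the Remark, stays inside the domain of $\mathcal{F}$). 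A routine estimate gives $\lVert \mathcal{T}\bm{x} - \mathcal{T}\bm{y}\rVert_\infty \le K\delta\,\lVert \bm{x}-\bm{y}\rVert_\infty$, so choosing $\delta < 1/K$ makes $\mathcal{T}$ both a self-map of the ball and a contraction; Banach's theorem then yields a unique fixed point, i.e. a unique solution on $[t_0, t_0+\delta]$.

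Finally I would globalise to all of $[t_0, T]$. Since $K$ and $M$ are uniform over $\mathcal{Z}\times[t_0,T]$, the admissible step $\delta$ does not shrink as we advance, so the local solution extends by repeating the argument on $[t_0+\delta, t_0+2\delta]$, and after finitely many steps it covers the whole interval; the pieces glue because any two solutions coincide on a maximal subinterval which, by the local result applied at its right endpoint, must be all of $[t_0,T]$. \textbf{The one genuine obstacle} is invariance: the continuation is only legitimate while the trajectory stays in the set on which $\mathcal{F}$ is Lipschitz. I would deal with this as the paper's Remark hints --- extend $\mathcal{F}(\cdot,t)$ to a globally Lipschitz map on $\mathbb{R}^{N\times D}$ (a coordinate-wise McShane--Whitney extension preserves Lipschitzness up to a dimensional factor), run the above on the extended field to obtain a solution on $[t_0,T]$, and then observe via a Gr\"onwall bound that this solution never leaves a fixed bounded region before time $T$, so it is in fact a solution of the original equation. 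This is the only point where the ``bounded closed convex'' hypothesis does real work, and it deserves to be stated explicitly rather than glossed over.
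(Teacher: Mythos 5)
Your proposal is correct: it is the standard Picard--Lindel\"of argument via the integral formulation, a contraction on a short interval, and continuation with a uniform step, which is precisely the classical proof the paper relies on --- the authors do not prove this lemma themselves but cite Coddington and Levinson for it. Your extra care about trajectory invariance (extending $\mathcal{F}$ off the bounded convex set and confirming via Gr\"onwall that the solution stays in a bounded region) is a legitimate refinement of a point the paper only gestures at in its Remark, but it does not change the substance of the argument.
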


\begin{lemma} \label{lemma_approx}
(\textit{Error Bound for Forward Euler approximation}) Given Forward Euler approximation shown in Equation \eqref{eq:resnetlike} and its underlying ODE in Equation \eqref{eq:resnetlikeODE}. Let $K > 0$ denotes the Lipschitz constant for the underlying ODE, and $\lVert \hat{\mathcal{F}}(x,t) - \mathcal{F}(x,t) \rVert \leq \delta$, hence the error of solution is given by 
\begin{equation*}
    \lVert \Delta \bm{x} \rVert \leq \frac{\delta}{K}(e^{K|t - t_0|}-1)
\end{equation*}
\end{lemma}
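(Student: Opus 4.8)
\noindent\emph{Proof strategy.} The plan is to recast the statement as a classical continuous-dependence estimate for ODEs and then close it with a Grönwall argument. Write $\bm{x}(t)$ for the exact solution of the underlying ODE \eqref{eq:resnetlikeODE} and $\hat{\bm{x}}(t)$ for the trajectory that the Forward Euler scheme \eqref{eq:resnetlike} interpolates, i.e.\ the solution of the perturbed equation $\tfrac{d}{dt}\hat{\bm{x}} = \hat{\mathcal{F}}(\hat{\bm{x}},t)$, both launched from the same $\bm{x}_0$ at $t_0$. By Lemma \eqref{lemma_solution} (after, if necessary, enlarging $\mathcal{Z}$ to a bounded closed convex set as in the Remark following Theorem \eqref{thm_lip}) both trajectories exist, are unique, and stay inside a region on which $\mathcal{F}$ is globally $K$-Lipschitz in $\bm{x}$, uniformly in $t$.

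First I would pass to integral form. Subtracting
\[
\bm{x}(t) = \bm{x}_0 + \int_{t_0}^{t}\mathcal{F}(\bm{x}(s),s)\,ds, \qquad \hat{\bm{x}}(t) = \bm{x}_0 + \int_{t_0}^{t}\hat{\mathcal{F}}(\hat{\bm{x}}(s),s)\,ds,
\]
and setting $g(t) := \lVert \Delta\bm{x}(t)\rVert = \lVert \hat{\bm{x}}(t)-\bm{x}(t)\rVert$, the triangle inequality together with the splitting $\lVert \hat{\mathcal{F}}(\hat{\bm{x}},s)-\mathcal{F}(\bm{x},s)\rVert \le \lVert \hat{\mathcal{F}}(\hat{\bm{x}},s)-\mathcal{F}(\hat{\bm{x}},s)\rVert + \lVert \mathcal{F}(\hat{\bm{x}},s)-\mathcal{F}(\bm{x},s)\rVert$, the hypothesis $\lVert \hat{\mathcal{F}}-\mathcal{F}\rVert\le\delta$, and the $K$-Lipschitz property of $\mathcal{F}$, gives the integral inequality $g(t) \le \delta\lvert t-t_0\rvert + K\int_{t_0}^{t} g(s)\,ds$.

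Second I would run Grönwall. For $t\ge t_0$ put $u(t) := \delta(t-t_0) + K\int_{t_0}^{t} g(s)\,ds \ge g(t)$; then $u(t_0)=0$ and $u'(t) = \delta + Kg(t) \le \delta + Ku(t)$, so $\tfrac{d}{dt}\!\left(e^{-K(t-t_0)}u(t)\right) \le \delta\, e^{-K(t-t_0)}$. Integrating from $t_0$ to $t$ yields $u(t) \le \tfrac{\delta}{K}\!\left(e^{K(t-t_0)}-1\right)$, hence $\lVert\Delta\bm{x}(t)\rVert = g(t) \le u(t) \le \tfrac{\delta}{K}\!\left(e^{K\lvert t-t_0\rvert}-1\right)$; the case $t<t_0$ is identical after reversing time, which is why the bound is stated with $\lvert t-t_0\rvert$. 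The only arithmetic here is $\int_{t_0}^{t}\delta\, e^{-K(s-t_0)}\,ds = \tfrac{\delta}{K}(1-e^{-K(t-t_0)})$, which collapses to the claimed closed form.

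The genuinely delicate point is the very first modeling step: justifying that the discrete Forward Euler iterates of \eqref{eq:resnetlike} are realized by (or uniformly close to) an honest ODE solution $\hat{\bm{x}}(t)$ whose right-hand side $\hat{\mathcal{F}}$ stays within $\delta$ of $\mathcal{F}$, and that both trajectories remain in the convex bounded domain on which Theorem \eqref{thm_lip} supplies a \emph{uniform} constant $K$ (note that the lemma deliberately absorbs the $O(h)$ local truncation error of the scheme into the single quantity $\delta$). Once that interpretation is granted, the remainder is the textbook Picard/Grönwall estimate and presents no further obstacle.
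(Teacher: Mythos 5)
Your proof is correct and coincides with the argument the paper relies on: the paper does not prove Lemma \eqref{lemma_approx} itself but defers it to Coddington--Levinson \cite{coddington1955theory}, and the classical "approximate solutions" estimate there is proved exactly as you do, by passing to integral form, splitting the defect into the $\delta$-perturbation term plus the $K$-Lipschitz term, and closing with Gr\"onwall to obtain $\tfrac{\delta}{K}(e^{K|t-t_0|}-1)$. Your closing remark about the modeling step (interpreting the discrete Euler iterates as a trajectory of a $\delta$-perturbed ODE) is also the right caveat, and is acknowledged by the paper only informally in the sentence following the lemma.
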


Since $\mathcal{F}(x,t)$ is continuous, $\delta$ can be arbitrary small as long as step for Euler approximation is small enough, namely, neural network is deep enough. The proof of lemma \eqref{lemma_solution} and \eqref{lemma_approx} can be found in \cite{coddington1955theory}.

\begin{theorem} \label{thm_main}   
Let $f$ and $g$ be two neural networks defined in Equation \eqref{eq:resnetlike}, which have the underlying ODEs as shown in Equation \eqref{eq:resnetlikeODE}, and denote the basic blocks of $g$ as $\mathcal{G}_k, k=1,...,L^{'}$ with its corresponding ODE defined as $\mathcal{G}$ to show the difference. Given point $\bm{x} \in \mathcal{X}$ and robustness metric $\rho^{\star}(\cdot)$ defined in \eqref{eq:robustmetric_star}, classifier $f$ is more robust than $g$, such that
\begin{equation}
    \rho^{\star}(f, \bm{x}) \leq \rho^{\star}(g, \bm{x}) \label{eq:thm1_1},     
\end{equation}

if $\forall t \in [t_0, T]$
\begin{equation}
    \sigma_{max}(J_{\mathcal{F}}(t)) \leq 
    \sigma_{max}(J_{\mathcal{G}}(t)) \label{eq:thm1_2}
\end{equation}

where $J_f(t)$ and $J_g(t)$ refers to the Jacobian of the basic blocks $\mathcal{F}$ and $\mathcal{G}$ w.r.t. $\bm{x}$ and $\sigma_{max}(\cdot)$ denotes the \textit{largest singular value} .   
\end{theorem}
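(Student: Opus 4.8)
The plan is to treat an input perturbation as Cauchy data for the equation linearised along the clean trajectory, to control how its size evolves from $t_0$ to $T$ through $\sigma_{max}$ of the Jacobian, and then to read $\rho^{\star}$ off the size of the perturbation that reaches the read-out.

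\emph{Step 1 (variational Cauchy problem).} Fix $\bm{x}\in\mathcal{X}$ and a perturbed input $\hat{\bm{x}}=\bm{x}+\Delta\bm{x}(t_0)$. By Lemma~\ref{lemma_solution} both have unique trajectories $\bm{x}(t),\hat{\bm{x}}(t)$ solving \eqref{eq:resnetlikeODE}, and by Lemma~\ref{lemma_approx} the forward-Euler (ViT) iterates differ from them by only $O(\delta)$, so we may argue on the ODE. Differentiating $\Delta\bm{x}(t):=\hat{\bm{x}}(t)-\bm{x}(t)$ and expanding $\mathcal{F}$ to first order about $\bm{x}(t)$ — licit because $\mathcal{F}$ is $C^{1}$ on the bounded convex domain (cf.\ Theorem~\ref{thm_lip}) — gives the variational equation $\tfrac{d}{dt}\Delta\bm{x}(t)=J_{\mathcal{F}}(t)\,\Delta\bm{x}(t)+o(\lVert\Delta\bm{x}(t)\rVert)$, where $J_{\mathcal{F}}(t):=J_{\mathcal{F}}(\bm{x}(t),t)$. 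Since $\rho^{\star}$ concerns infinitesimal distortions we drop the remainder, so $\Delta\bm{x}$ solves the linear Cauchy problem $\dot{\bm{v}}=J_{\mathcal{F}}(t)\bm{v}$ and $\Delta\bm{x}(T)=\Phi_{\mathcal{F}}(T,t_0)\Delta\bm{x}(t_0)$ with $\Phi_{\mathcal{F}}$ the state-transition matrix; likewise $\Delta\bm{x}(T)=\Phi_{\mathcal{G}}(T,t_0)\Delta\bm{x}(t_0)$ for $g$.

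\emph{Step 2 (amplification bound).} In the $2$-norm, $\tfrac{d}{dt}\lVert\Delta\bm{x}(t)\rVert_2^{2}=2\langle\Delta\bm{x}(t),J_{\mathcal{F}}(t)\Delta\bm{x}(t)\rangle\le 2\,\sigma_{max}(J_{\mathcal{F}}(t))\lVert\Delta\bm{x}(t)\rVert_2^{2}$ by Cauchy--Schwarz, hence $\tfrac{d}{dt}\lVert\Delta\bm{x}(t)\rVert_2\le\sigma_{max}(J_{\mathcal{F}}(t))\lVert\Delta\bm{x}(t)\rVert_2$, and Gr\"onwall yields $\lVert\Delta\bm{x}(T)\rVert_2\le\lVert\Delta\bm{x}(t_0)\rVert_2\,\exp(\int_{t_0}^{T}\sigma_{max}(J_{\mathcal{F}}(t))\,dt)$, i.e.\ $\sigma_{max}(\Phi_{\mathcal{F}}(T,t_0))\le\exp(\int_{t_0}^{T}\sigma_{max}(J_{\mathcal{F}}(t))\,dt)$ — the worst-case amplification of an input distortion by $f$. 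Doing the same for $g$ and feeding in hypothesis \eqref{eq:thm1_2}, $\sigma_{max}(J_{\mathcal{F}}(t))\le\sigma_{max}(J_{\mathcal{G}}(t))$ for every $t\in[t_0,T]$, the two integrals are ordered, so $f$ propagates any given input distortion to the read-out no more strongly than $g$.

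\emph{Step 3 (back to $\rho^{\star}$) and the main obstacle.} By the construction of $\rho^{\star}$ in \eqref{eq:robustmetric_star} and Fig.~\ref{fig:illustration_rho}, robustness at $\bm{x}$ is read through the output-layer distortion $\lVert\Delta\bm{x}(T)\rVert$ that is reached before $softmax(LP(\cdot))$ changes its label; combining this with the amplification comparison of Step~2 over the whole trajectory gives $\rho^{\star}(f,\bm{x})\le\rho^{\star}(g,\bm{x})$, which is \eqref{eq:thm1_1}. I expect this last identification, together with two gaps in Step~2, to be the hard part: the instantaneous top singular direction of $J_{\mathcal{F}}(t)$ rotates with $t$, so $\exp(\int\sigma_{max})$ is an upper bound rather than an identity and one must argue (e.g.\ via an ODE comparison principle, or by taking it as the worst-case certificate) that it is the right quantity to compare; and $\rho^{\star}$ ultimately concerns the \emph{label} flipping, not merely $\lVert\Delta\bm{x}(T)\rVert$ crossing a threshold, so the geometry of the decision region of $softmax\circ LP$ around $\bm{x}(T)$ — exactly the issue Fig.~\ref{fig:illustration_rho} raises — has to be handled. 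A minor point: $\sigma_{max}$ is the $2$-norm operator norm, so for $p\in\{1,\infty\}$ one passes through norm equivalence on $\mathbb{R}^{N\times D}$ (or replaces $\sigma_{max}$ by the matching logarithmic norm).
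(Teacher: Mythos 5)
Your proposal follows essentially the same route as the paper's proof: linearize the difference of the two trajectories along the clean solution, bound the growth of $\lVert\Delta\bm{x}(t)\rVert_2$ via Gr\"onwall by $\exp\bigl(\int_{t_0}^{T}\sigma_{max}(J_{\mathcal{F}}(t))\,dt\bigr)$, compare the two exponentials under hypothesis \eqref{eq:thm1_2}, and read off \eqref{eq:thm1_1} from the definition of $\rho^{\star}$ (the paper keeps the Taylor remainder as an $M\epsilon$ term inside the exponent and then declares it negligible, which matches your decision to drop it for infinitesimal distortions). The two ``hard parts'' you flag at the end --- that $\exp(\int\sigma_{max})$ is only an upper bound on the true amplification, and that $\rho^{\star}$ concerns label flipping rather than a norm threshold --- are real, but the paper's own proof passes over both in exactly the same way, so your argument is faithful to (and if anything more candid than) the published one.
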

\begin{proof}

Consider 2 solutions $\bm{x}(t), \bm{\hat{x}}(t)$ of ODE defined in \eqref{eq:resnetlikeODE} such that for $\epsilon > 0$
\begin{equation*}
    \lVert \bm{\hat{x}}(t_0) - \bm{x}(t_0) \rVert_2 \leq \epsilon
\end{equation*}
and let $\Delta \bm{x}(t) =  \bm{\hat{x}}(t) - \bm{x}(t), t \in [t_0, T]$ hence 
\begin{equation}
\begin{split}
    \frac{d}{dt}\Delta \bm{x} &= \mathcal{F}(\bm{\hat{x}}, t) - \mathcal{F}(\bm{x},t) = J_{\mathcal{F}}(t)\Delta \bm{x} + \bm{r}_{\mathcal{F}}(\Delta \bm{x}), 
\end{split} \label{eq:thm1_4}
\end{equation}
where $\bm{r}_{\mathcal{F}}(\Delta \bm{x})$ is the residual of \textit{Taylor Expansion} of $\mathcal{F}$ w.r.t. $\bm{x}$, such that $\lVert \bm{r}_{\mathcal{F}}(\Delta \bm{x}) \rVert = \mathcal{O}( \lVert \Delta \bm{x} \rVert^2)$ \cite{ascher1995numerical}. Instead of $\Delta \bm{x}$, $\lVert \Delta \bm{x} \rVert_2$ is more of our interest, hence 
\begin{equation}
\begin{split}
    \frac{d}{dt}\lVert \Delta \bm{x} \rVert_2 \leq \lVert \frac{d}{dt} \Delta \bm{x} \rVert_2 \leq \lVert J_{\mathcal{F}}(t)\rVert_2 \lVert \Delta\bm{x}\rVert_2 + \mathcal{O}(\lVert \Delta \bm{x} \rVert_2^2), 
\end{split} \label{eq:thm1_5}
\end{equation}
since $\lVert \Delta \bm{x}(t_0) \rVert_2 = 0$ is trivial, we assume $\lVert \Delta \bm{x}(t_0) \rVert_2 > 0$. And because there exist unique solution for the ODE system, we have $\lVert \Delta \bm{x}(t) \rVert_2 > 0, t \in[t_0, T]$ therefore Equation \eqref{eq:thm1_5} becomes 
\begin{equation}
\begin{split}
    \frac{1}{\lVert \Delta\bm{x}\rVert_2}\frac{d}{dt}\lVert \Delta \bm{x} \rVert_2 &\leq \lVert J_{\mathcal{F}}(t)\rVert_2 + \mathcal{O}(\lVert \Delta \bm{x} \rVert_2). 
\end{split} \label{eq:thm1_6}
\end{equation}
After integral of the both sides from $t_0$ to $T$ we have 
\begin{equation*}
\begin{split}
    \int_{t_0}^{T}\frac{1}{\lVert \Delta\bm{x}\rVert_2}d\lVert \Delta \bm{x} \rVert_2 \leq \int_{t_0}^{T} \lVert J_{\mathcal{F}}(t)\rVert_2 + M\epsilon dt, 
\end{split}
\end{equation*}
where $M > 0$ is a given large number. The integral for $[t_0, T]$ is given by 
\begin{equation}
    \lVert \Delta \bm{x}(T) \lVert_2 \leq \epsilon e^{\int_{t_0}^{T} \lVert J_{\mathcal{F}}(t)\rVert_2dt + (T-t_0)M\epsilon}. \label{eq:thm1_8}
\end{equation}
It is obvious that the perturbed output of neural network $\Delta \bm{x}(T)$ is actually bounded by the right-hand-side of Equation \eqref{eq:thm1_8} which is determined by the $\lVert J_{\mathcal{F}}(t) \rVert_2, t\in [t_0, T]$, namely the largest \textit{singular value} of $J_{\mathcal{F}}(t)$, denoted as $\sigma_{max}(J_{\mathcal{F}}(t))$. The rest of the proof is simple, since if $\forall t \in [t_0, T]$ \eqref{eq:thm1_2} holds and $(T-t_0)M\epsilon$ is negligible, we have    
\begin{equation*}
    \lVert \Delta \bm{x}_{\mathcal{F}}(T) \lVert_2 \leq \epsilon e^{\int_{t_0}^{T} \lVert J_{\mathcal{F}}(t)\rVert_2dt} \leq \epsilon e^{\int_{t_0}^{T} \lVert J_{\mathcal{G}}(t)\rVert_2dt},
\end{equation*}
therefore for any $\lVert \Delta \bm{x}_{\mathcal{F}}(t_0) \rVert_2 \leq \rho^{\star}(g, \bm{x})$ the classification result will also not change for $f$, hence the Equation \eqref{eq:thm1_1}. 
\end{proof}
\begin{remark}
Theorem \eqref{thm_main} is particularly useful for adversarial perturbation since the approximation in Equation \eqref{eq:thm1_8} relies on the narrowness of $\epsilon$. If it is too large, the first-order approximation may fail.
\end{remark}

\begin{remark}
Theorem \eqref{thm_main} assumes that the approximation error induced in lemma \eqref{lemma_approx} is small enough to neglect. For very shallow models, e.g., ViT-$S_1$, ViT-$S_2$, the relation is violated, as is shown in Table \eqref{tab:attacks_singular}. 
\end{remark}

\section{Empirical Study}

In order to verify the proposed theorem and find out whether self-attention indeed contributes to the adversarial robustness of ViTs, we replace the self-attention with a 1-D convolutional layer, as shown in figure \eqref{fig:vit_and_covit}. And we name the modified model \textit{CoViT}, which stands for \textit{Convolutional Vision Transformer}. We use Average Pooling  instead of the classification token since the classification token can only learn the nearest few features rather than the whole feature maps for CoViT.

Both ViTs and CoViTs are trained from sketches without any pertaining to ensure that they are comparable. \textit{Sharpness-Aware Minimization (SAM)} \cite{foret2020sharpness} optimizer is used throughout the experiments to ensure adequate clear accurate. 

\begin{figure}[t]
\centering
\includegraphics[width=0.8\textwidth]{./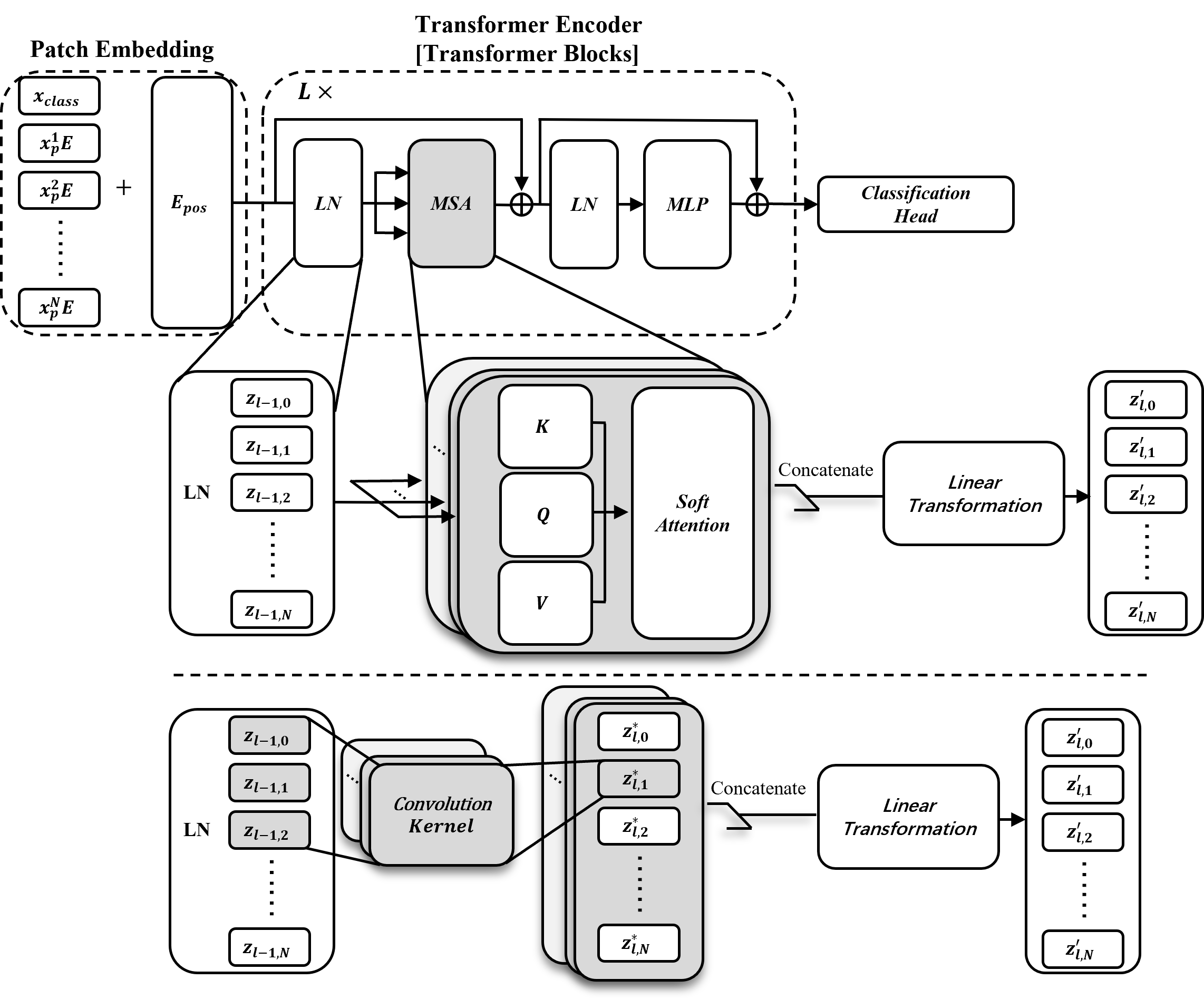}
\caption{Illustration of ViT and CoViTs. After \textit{Patch Embedding}, the \textit{Transformer Encoder} is composed of $L \times$ \textit{Transformer Blocks}, of which in each \textit{K,Q} and \textit{V} stands for \textit{Key, Query} and \textit{Value} are computed as linear projection from former tokens $z_{l-1}$, hence \textit{Self-Attention} is calculated as $softmax(\frac{QK^T}{\sqrt{D}})V$. In order to better understand whether self-attention indeed contributes to adversarial robustness, it is replaced by \textit{1-D convolution layers} where different kernel are used and the intermediates, denoted by $z_l^{*}$, are generated before concatenation and linearly projecting to $z_l^{'}$. The kernel size can be different for each convolutional projection.} \label{fig:vit_and_covit}
\end{figure}

\subsection{Configuration and Training Result}
The configurations for Both ViTs and CoViTs have an input resolution of 224 and embedding sizes of 128 and 512. The use of a smaller embedding size of 128 is to calculate the maximum singular value exactly. An upper bound is calculated instead for models with a larger embedding size since the exact calculation is intractable. We change the number of heads for ViTs, the kernels for CoViTs, the depth, and the patch size for the experiment. All the models are divided into four groups: S, M, L, T, standing for \textit{Small}, \textit{Medium}, \textit{Large}, and \textit{Tiny} of parameter size. The tiny model uses an embedding size of 128. The detailed configuration is shown in Appendix. 

All the models are trained on CIFAR10, and the base optimizer for SAM is SGD with the One-cycle learning scheduler of maximum learning rate equals to $0.1$. In order to have a better performance, augmentations, including \textit{Horizontal Flipping, Random Corp} and \textit{Color Jitter}, are involved during training. We resize the image size to the resolution of $224 \times 224$. The model with an embedding size of $512$ is trained by $150$ epochs, and the tiny model with an embedding size of $128$ is trained by $300$ to achieve adequate performance. The performance of models within the same group is similar, and the shallow networks, e.g., ViT-$S_1$, ViT-$S_2$, CoViT-$S_1$, CoViT-$S_2$, are harder to train and may need extra training to be converged. This may be due to the optimizer used, i.e., SAM, since SAM will try to find shallow-wide optima instead of a deep-narrow one, which requires a stronger model capacity.   

The experiments are conducted on Nvidia RTX3090 with python 3.9.7, and realized by PyTorch 1.9.1. \textit{Torchattacks} \cite{kim2020torchattacks} is used for adversarial attacks. 

\subsection{Study for Small Scale Models}
In order to find out whether MSA contributes to the adversarial robustness of ViTs and verify Theorem \eqref{thm_main}, tiny models with an embedding size of $128$ are employed and attacked by $L_2$-norm PGD-20 and CW. The threshed of successful attacks for CW is set to $260$. The corresponding average and standard deviation of the exact maximum singular value for the Jacobian is calculated over layers and images to indicate the overall magnitude of $\sigma_{max}(t)$ over the interval $[t_0, T]$. In other words, we calculate the mean value of $\int_{t_0}^{T}\lVert J_{\mathcal{F}}(t) \rVert_2 dt$ for 500 images to indicate the global robustness of the classifier. The PGD-20($L_2$) and CW share the same setting with large-scale experiments in Table \eqref{tab:attacks_singular}, except that the total iteration for PGD is $20$ instead of $7$.   

{\bf Verification of Theorem~~}
The result, as shown in Table \eqref{tab:exact_sigma}, generally matches our theoretic analysis since the most robust model has the lowest average maximum singular value. It is worth mentioning that the smaller value of $\bar{\sigma}_{max}$ cannot guarantee stronger robustness for ViTs in Table \eqref{tab:exact_sigma}, since the standard deviations of $\sigma_{max}$ are much larger, e.g., $11.66$, than that of CoViTs. 

{\bf Contribution of MSA~~}
Another observation is that CoViTs are generally more robust than ViTs. In other words, without enough embedding capacity, Self-Attention could even hurt both the robustness and generalization power. In addition, increasing the models' depth will enhance both generalization power and robustness.

\begin{table*}[!t]
\caption{Attack result and the average maximum singular value for tiny model. All the models have embedding size of 128 with different depth and head or kernels. The cleaning accuracy and the robust accuracy for PGD-20 and CW attack are shown in the Table. The mean of exact maximum singular value $\sigma_{max}$ for over layers and 500 input images are calculated with standard deviation shown in square. The highest accuracy and lowest maximum singular value are marked in bold.}
\label{tab:exact_sigma}
\vskip 0.15in
\begin{center}
\begin{small}
\begin{sc}
\begin{tabular}{l|l|l|l|l|l|l}
\toprule
 \multirow{2}{*}{Net-name} & \multirow{2}{*}{Depth} &\#Head/ & \multirow{2}{*}{Clear Acc.} & \multirow{2}{*}{PGD-20($L_{2}$)} & \multirow{2}{*}{CW($L_2$)} &  \multirow{2}{*}{$\Bar{\sigma}_{max}$} \\
 & &Kernel & & & & \\
\midrule
ViT-$T_1$   & $4$ & $1$          &$0.819$ & $0.397$      & $0.0305$ & $10.45(4.125)$\\
ViT-$T_2$   & $4$ & $4$          &$0.820$ & $0.407$      & $0.039$ & $18.18(11.66)$\\
CoViT-$T_1$ & $4$ & $K3$         &$0.849$ & $0.492$      & $\bm{0.083}$ & $9.25(0.822)$\\
CoViT-$T_2$ & $4$ & $4\times K3$ &$\bm{0.852}$ & $\bm{0.492}$ & $0.036$ & $\bm{9.186(1.088)}$\\
\midrule
ViT-$T_3$   & $8$ & $1$          &$0.836$ & $0.442$      & $0.040$ &  $7.17(1.52)$\\
ViT-$T_4$   & $8$ & $4$          &$0.834$ & $0.463$      & $0.048$ & $10.03(2.73)$ \\
CoViT-$T_3$ & $8$ & $K3$         &$\bm{0.860}$ & $0.514$      & $\bm{0.076}$ & $\bm{6.413(0.562)}$\\
CoViT-$T_4$ & $8$ & $4\times K3$ &$0.860$ & $\bm{0.515}$ & $0.065$ & $6.662(0.386)$\\
\bottomrule
\end{tabular}
\end{sc}
\end{small}
\end{center}
\vskip -0.1in
\end{table*}

\begin{figure}[!t]
    \centering
    \includegraphics[width=1\textwidth]{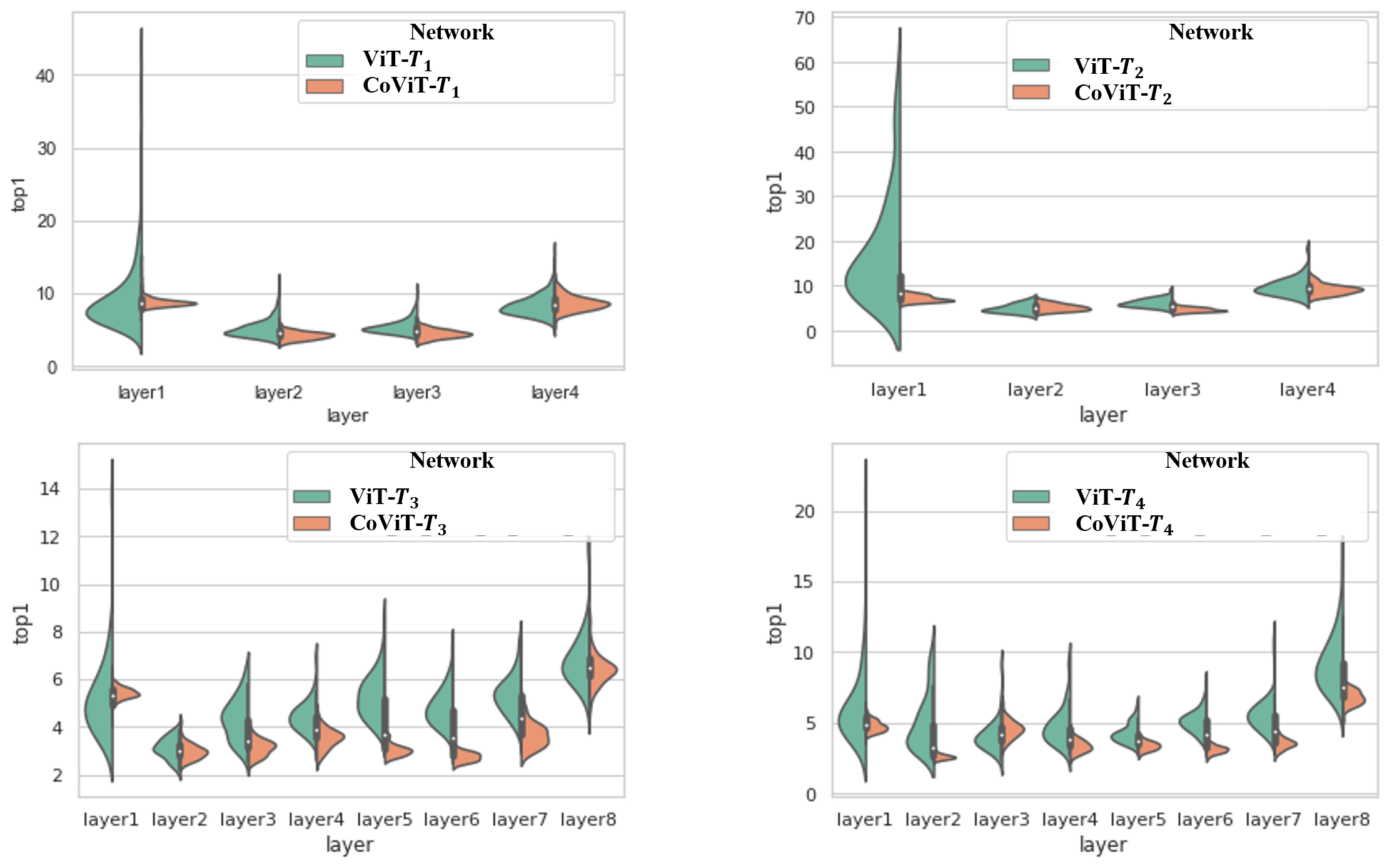}
    \caption{Violin plot for maximum singular value for each layer of the ViTs/CoViTs. The y-axis shows the maximum singular value.}
    \label{fig:exact_sigma}
\end{figure}

{\bf Distribution of Maximum Singular Value in Each Layer~~}
In order to know which layer contributes most to the  non-robustness, the distribution of $\sigma_{max}$ is calculated. The layer that has the highest value of $\sigma_{max}$ may dominate the robustness of the network. As is shown in figure \eqref{fig:exact_sigma}, maximum singular values for the CoViTs are much concentrated around the means, reflecting more stable results for classification. And the maximum singular values for the first and last layer of all tiny models are significantly higher than that for in-between layers, indicating that the first and last layers in the transformer encoder are crucial for adversarial robustness. 

\subsection{Contribution of MSA to Robustness for Large Scale Models}
We attack both ViTs and CoViTs with FGSM, PGD, and CW for large-scale models and compare the robust accuracy. And since it is intractable to compute exact maximum singular value for the matrix of size $(128\cdot 512) \times (128\cdot 512)$, an upper bound of maximum singular value is calculated as 
\begin{equation}\label{eq:single_value_approx}
    \lVert J \rVert_2 \leq \bigg(\lVert J \rVert_1 \lVert J \rVert_{\infty}\bigg)^{\frac{1}{2}},
\end{equation}
which is used as an approximation to the maximum singular value of the Jacobian. $\lVert J \rVert_1$ and $\lVert J \rVert_{\infty}$ denotes $L_1$ and $L_{\infty}$ induced norm for the Jacobian. The mean value for 50 images is taken.

\begin{table*}[!t]
\caption{Summary of attacking results and corresponding estimated largest singular value. The attacks are employed for both ViTs and CoViTs with FGSM, PGD-7 and CW, and the robust accuracy are shown for each attack. The models with patch size $32\times 32$ are marked with $*$. $\lVert J \rVert_1$ and $\lVert J \rVert_{\infty}$ are the $L_1$ and $L_{\infty}$ norm respectively. The highest accuracy and lowest estimated maximum singular value are marked in bold.}
\label{tab:attacks_singular}
\vskip 0.15in
\begin{center}
\begin{small}
\begin{sc}
\begin{tabular}{l|l|l|l|l|l|l}
\toprule
             &Clean Acc. & FGSM & PGD-7($L_{\infty}$) & PGD-7($L_{2}$) & CW($L_{2}$) & $(\lVert J \rVert_1 \lVert J \rVert_{\infty})^{\frac{1}{2}}$  \\
\midrule
ViT-$S_1$ & $0.676$ & $0.213$      & $0.135$      & $0.267$      & $0.059$      & $812.69$  \\
ViT-$S_2$ & $0.739$ & $\bm{0.273}$ & $0.162$      & $\bm{0.348}$ & $0.067$      & $1003.20$ \\
CoViT-$S_1$ & $0.734$ & $0.254$    & $\bm{0.173}$ & $0.341$      & $\bm{0.144}$ & $242.78$ \\
CoViT-$S_2$ & $0.737$ & $0.244$    & $0.163$      & $0.328$      & $0.143$      & $\bm{206.78}$ \\
\midrule
ViT-$S_3$ & $0.847$ & $0.369$      & $0.221$      & $0.444$      & $0.053$ & $296.48$ \\
ViT-$S_4$ & $0.863$ & $\bm{0.392}$ & $\bm{0.240}$ & $\bm{0.448}$ & $0.065$ & $462.12$ \\
CoViT-$S_3$ & $0.882$ & $0.320$    & $0.179$      & $0.413$      & $\bm{0.104}$ & $\bm{146.48}$ \\
CoViT-$S_4$ & $0.876$ & $0.306$    & $0.170$      & $0.401$      & $0.088$ & $150.02$ \\
CoViT-$S_5$ & $0.868$ & $0.341$    & $0.192$      & $0.424$      & $0.082$ & $163.50$  \\
\midrule
ViT-$M_1$ & $0.877$ & $0.415$      & $0.267$      & $0.467$       & $0.049$ & $236.21$  \\
ViT-$M_2$ & $0.861$ & $0.415$      & $0.260$      & $0.461$       & $0.053$ & $294.06$ \\
*ViT-$M_3$ & $0.853$ & $\bm{0.478}$ & $\bm{0.356}$ & $\bm{0.519}$ & $0.103$ & $139.23$ \\
CoViT-$M_1$ & $0.881$ & $0.336$    & $0.185$      & $0.422$       & $0.051$ & $93.21$ \\
CoViT-$M_2$ & $0.882$ & $0.337$    & $0.197$      & $0.417$       & $0.086$ & $109.79$ \\
CoViT-$M_3$ & $0.870$ & $0.337$    & $0.194$      & $0.424$       & $0.072$ & $131.94$ \\
CoViT-$M_4$ & $0.875$ & $0.357$    & $0.208$      & $0.427$       & $0.093$ & $99.57$ \\
*CoViT-$M_5$ & $0.861$ & $0.416$    & $0.303$      & $0.480$      & $\bm{0.152}$ & $\bm{78.70}$ \\
\midrule
*ViT-$L$     & $0.848$ & $0.461$      & $0.347$      & $0.499$      & $0.094$ & $111.38$  \\
*CoViT-$L_1$ & $0.867$ & $0.443$      & $0.333$      & $0.505$      & $\bm{0.140}$ & $\bm{59.54}$ \\
*CoViT-$L_2$ & $0.853$ & $\bm{0.466}$ & $\bm{0.357}$ & $\bm{0.528}$ & $0.096$ &  $37.26$ \\
\bottomrule
\end{tabular}
\end{sc}
\end{small}
\end{center}
\vskip -0.1in
\end{table*}

The \textit{Robust Accuracy} for both ViTs and CoViTs attacked by FGSM, PGD-7 and CW is shown in Table \eqref{tab:attacks_singular}. For better comparison, we set $\epsilon = 2/225$ for both FGSM and PDG attack with $L_{\infty}$ norm. The step size for $L_{\infty}$ PGD attack is set to $\alpha = 2/255$ and it is iterated only for $7$ times to represent the weak attack. The $L_2$ norm PGD-7 is parameterized by $\epsilon = 2, \alpha = 0.2$. The parameters set for stronger CW attack is that $c=1$, adversarial confidence level $kappa=0$, learning rate for Adam \cite{kingma2014adam} optimizer in CW is set to $0.01$ and the total iteration number is set to $100$.      

As is shown in Table \eqref{tab:attacks_singular}, for weak attacks, i.e., FGSM and PGD-7, ViTs are generally exhibiting higher robust accuracy within the same group of similar parameter sizes with only a few exceptions. Also, both for ViTs and CoViTs, the robustness is strengthened as the model becomes deeper with more parameters.   

For a stronger CW attack, the result is almost reversed, CoViT model shows significantly better robustness and agrees with the approximation of the maximum singular value for the Jacobian. The ability of Self-Attention to avoid perturbed pixels is compromised as the attacking becomes stronger. And it seems that the \textit{translation invariance} of CNNs has more defensive power against strong attacks. In addition, a larger patch size always induces better adversarial robustness for both ViTs and CoViTs.

\section{Conclusion}

This paper first proves that ViTs are Lipschitz continuous for vision tasks, then we formally bridge up the local robustness of transformers with the Cauchy problem. We theoretically proved that the maximum singular value determines local robustness for the Jacobian of each block. Both small-scale and large-scale experiments have been conducted to verify our theories. With the proposed framework, we open the black box of ViTs and study how robustness changes among layers. We found that the first and last layers impede the robustness of ViTs. In addition, unlike existing research that argues MSA could boost robustness, we found that the defensive power of MSA in ViT only works for the large model under weak adversarial attacks. MSA even compromises the adversarial robustness under strong attacks.

\section{Discussion and Limitation}
The major limitations in this paper are embodied by the several approximations involved. The first one is the approximation of the underlying ODEs to the forward propagation of neural networks with a residual addition structure. As is shown in Lemma~\eqref{lemma_approx}, the approximation is accurate only when the neural networks are deep enough, and it is hard to know what depth is enough, given the required error bound. One possible way to make it accurate is to consider the \textit{Difference Equation}, which is a discrete parallel theory to ODEs. The second one is the approximation of the second-order term in Equation~\eqref{eq:thm1_6}. For small-size inputs, we can say that the $L_2$-norm of perturbations of adversarial examples is smaller enough so that the second term is negligible. However, the larger inputs may inflate the $L_2$-norm of perturbations since simply up sampling could result in a larger $L_2$-norm. Therefore, including the second term or choosing a better norm should be considered. The third approximation is shown in Equation~\eqref{eq:single_value_approx}. Since the size of the Jacobian depends on the size of the input image, making it impossible to directly calculate the singular value of the Jacobian for larger images, hence, we use an upper bound instead, which inevitably compromises the validation of the experiment. Moreover, since the adversarial attack can only get the upper bound of the minimal perturbations, it is also an approximation of the local robustness, as shown in Table~\eqref{tab:attacks_singular}. 

In the experimental part, we only take into account for the small to moderate size models because it is necessary to rule out the influence of pre-training, and we have to admit that the calculation for the singular value of Jacobian w.r.t. inputs of too large size is hardly implemented.




%
%

%
%
%
\bibliographystyle{splncs04}

\begin{thebibliography}{8}
\bibitem{aldahdooh2021reveal}
Aldahdooh, A., Hamidouche, W., Deforges, O.: Reveal of vision transformers robustness against adversarial attacks. arXiv preprint arXiv:2106.03734 (2021)
\bibitem{ascher1995numerical}
Ascher, U.M., Mattheij, R.M., Russell, R.D.: Numerical solution of boundary value problems for ordinary differential equations. SIAM (1995)
\bibitem{bastani2016measuring}
Bastani, O., Ioannou, Y., Lampropoulos, L., Vytiniotis, D., Nori, A., Criminisi, A.: Measuring neural net robustness with constraints. Advances in neural information processing systems 29, 2613–2621 (2016)
\bibitem{benz2021adversarial}
Benz, P., Ham, S., Zhang, C., Karjauv, A., Kweon, I.S.: Adversarial robustness comparison of vision transformer and mlp-mixer to cnns. arXiv preprint arXiv:2110.02797 (2021)
\bibitem{bhojanapalli2021understanding}
Bhojanapalli, S., Chakrabarti, A., Glasner, D., Li, D., Unterthiner, T., Veit, A.: Understanding robustness of transformers for image classification. In: CVF International Conference on Computer Vision, ICCV. vol. 9 (2021)
\bibitem{carion2020end}
Carion, N., Massa, F., Synnaeve, G., Usunier, N., Kirillov, A., Zagoruyko, S.: End-to-end object detection with transformers. In: European Conference on Computer Vision. pp. 213–229. Springer (2020)
\bibitem{chen2018neural}
Chen, R.T., Rubanova, Y., Bettencourt, J., Duvenaud, D.K.: Neural ordinary differential equations. Advances in neural information processing systems 31 (2018)
\bibitem{coddington1955theory}
Coddington, E.A., Levinson, N.: Theory of ordinary differential equations. Tata McGraw-Hill Education (1955)
\bibitem{cordonnier2019relationship}
Cordonnier, J.B., Loukas, A., Jaggi, M.: On the relationship between self-attention and convolutional layers. In: International Conference on Learning Representations (2020)
\bibitem{dosovitskiy2020image}
Dosovitskiy, A., Beyer, L., Kolesnikov, A., Weissenborn, D., Zhai, X., Unterthiner, T., Dehghani, M., Minderer, M., Heigold, G., Gelly, S., et al.: An image is worth 16x16 words: Transformers for image recognition at scale. In: International Conference on Learning Representations (2021)
\bibitem{dupont2019augmented}
Dupont, E., Doucet, A., Teh, Y.W.: Augmented neural odes. Advances in Neural Information Processing Systems 32 (2019)
\bibitem{federer2014geometric}
Federer, H.: Geometric measure theory. Springer (2014)
\bibitem{foret2020sharpness}
Foret, P., Kleiner, A., Mobahi, H., Neyshabur, B.: Sharpness-aware minimization for efficiently improving generalization. arXiv preprint arXiv:2010.01412 (2020)
\bibitem{grossberg2013recurrent}
Grossberg, S.: Recurrent neural networks. Scholarpedia 8(2), 1888 (2013)
\bibitem{he2015deep}
He, K., Zhang, X., Ren, S., Sun, J.: Deep residual learning for image recognition. corr abs/1512.03385 (2015) (2015)
\bibitem{hendrycks2019natural}
Hendrycks, D., Zhao, K., Basart, S., Steinhardt, J., Song, D.: Natural adversarial examples. CVPR (2021)
\bibitem{hinton2015distilling}
Hinton, G., Vinyals, O., Dean, J.: Distilling the knowledge in a neural network. arXiv preprint arXiv:1503.02531 (2015)
\bibitem{hu2021inheritance}
Hu, H., Lu, X., Zhang, X., Zhang, T., Sun, G.: Inheritance attention matrix-based universal adversarial perturbations on vision transformers. IEEE Signal Processing Letters 28, 1923–1927 (2021)
\bibitem{khan2021transformers}
Khan, S., Naseer, M., Hayat, M., Zamir, S.W., Khan, F.S., Shah, M.: Transformers in vision: A survey. ACM Computing Surveys (CSUR) (2021)
\bibitem{kim2020torchattacks}
Kim, H.: Torchattacks: A pytorch repository for adversarial attacks. arXiv preprint arXiv:2010.01950 (2020)
\bibitem{kim2021lipschitz}
Kim, H., Papamakarios, G., Mnih, A.: The lipschitz constant of self-attention. In: International Conference on Machine Learning. pp. 5562–5571. PMLR (2021)
\bibitem{kingma2014adam}
Kingma, D.P., Ba, J.: Adam: A method for stochastic optimization. arXiv preprint arXiv:1412.6980 (2014)
\bibitem{liao2016bridging}
Liao, Q., Poggio, T.: Bridging the gaps between residual learning, recurrent neural networks and visual cortex. arXiv preprint arXiv:1604.03640 (2016)
\bibitem{liu2103swin}
Liu, Z., Lin, Y., Cao, Y., Hu, H., Wei, Y., Zhang, Z., Lin, S., Guo, B.: Swin transformer: Hierarchical vision transformer using shifted windows. In: Proceedings of the IEEE/CVF International Conference on Computer Vision. pp. 10012–10022 (2021)
\bibitem{lu2018beyond}
Lu, Y., Zhong, A., Li, Q., Dong, B.: Beyond finite layer neural networks: Bridging deep architectures and numerical differential equations. In: International Conference on Machine Learning. pp. 3276–3285. PMLR (2018)
\bibitem{mahmood2021robustness}
Mahmood, K., Mahmood, R., Van Dijk, M.: On the robustness of vision transformers to adversarial examples. In: Proceedings of the IEEE/CVF International Conference on Computer Vision. pp. 7838–7847 (2021)
\bibitem{mao2021discrete}
Mao, C., Jiang, L., Dehghani, M., Vondrick, C., Sukthankar, R., Essa, I.: Discrete representations strengthen vision transformer robustness. arXiv preprint arXiv:2111.10493 (2021)
\bibitem{mao2021towards}
Mao, X., Qi, G., Chen, Y., Li, X., Duan, R., Ye, S., He, Y., Xue, H.: Towards robust vision transformer. In: Proceedings of the IEEE/CVF Conference on Computer Vision and Pattern Recognition. pp. 12042–12051 (2022)
\bibitem{moosavi2017universal}
Moosavi-Dezfooli, S.M., Fawzi, A., Fawzi, O., Frossard, P.: Universal adversarial perturbations. In: Proceedings of the IEEE conference on computer vision and pattern recognition. pp. 1765–1773 (2017)
\bibitem{naseer2021intriguing}
Naseer, M.M., Ranasinghe, K., Khan, S.H., Hayat, M., Shahbaz Khan, F., Yang, M.H.: Intriguing properties of vision transformers. Advances in Neural Information Processing Systems 34, 23296–23308 (2021)
\bibitem{paul2021vision}
Paul, S., Chen, P.Y.: Vision transformers are robust learners. arXiv preprint arXiv:2105.07581 (2021)
\bibitem{qin2021understanding}
Qin, Y., Zhang, C., Chen, T., Lakshminarayanan, B., Beutel, A., Wang, X.: Understanding and improving robustness of vision transformers through patch-based negative augmentation. arXiv preprint arXiv:2110.07858 (2021)
\bibitem{ruseckas2019differential}
Ruseckas, J.: Differential equations as models of deep neural networks. arXiv preprint arXiv:1909.03767 (2019)
\bibitem{srinivas2021bottleneck}
Srinivas, A., Lin, T.Y., Parmar, N., Shlens, J., Abbeel, P., Vaswani, A.: Bottleneck transformers for visual recognition. In: Proceedings of the IEEE/CVF Conference on Computer Vision and Pattern Recognition. pp. 16519–16529 (2021)
\bibitem{sun2018stochastic}
Sun, Q., Tao, Y., Du, Q.: Stochastic training of residual networks: a differential equation viewpoint. arXiv preprint arXiv:1812.00174 (2018)
\bibitem{touvron2021training}
Touvron, H., Cord, M., Douze, M., Massa, F., Sablayrolles, A., Jégou, H.: Training data-efficient image transformers \& distillation through attention. In: International Conference on Machine Learning. pp. 10347–10357. PMLR (2021)
\bibitem{wang2021pyramid}
Wang, W., Xie, E., Li, X., Fan, D.P., Song, K., Liang, D., Lu, T., Luo, P., Shao, L.: Pyramid vision transformer: A versatile backbone for dense prediction without convolutions. In: Proceedings of the IEEE/CVF International Conference on Computer Vision. pp. 568–578 (2021)
\bibitem{zhou2021deepvit}
Zhou, D., Kang, B., Jin, X., Yang, L., Lian, X., Jiang, Z., Hou, Q., Feng, J.: Deepvit: Towards deeper vision transformer. arXiv preprint arXiv:2103.11886 (2021)





\end{thebibliography}
%

\end{document}


%
\title{Supplementary Material - Understanding Adversarial Robustness of Vision Transformers via Cauchy Problem\footnote{W. Ruan is the corresponding author.\\ W. Ruan is supported by Partnership Resource Fund (PRF) on Towards the Accountable and Explainable Learning-enabled Autonomous Robotic Systems from UK EPSRC project on Offshore Robotics for Certification of Assets (ORCA) [EP/R026173/1].}}
%
\titlerunning{Understanding Adversarial Robustness of Vision Transformers via Cauchy Problem}
%
\author{Zheng Wang\inst{1}\orcidID{0000-0001-7146-7503} \and
Wenjie Ruan \Letter \inst{1}\orcidID{0000-0002-8311-8738}}
\tocauthor{Zheng Wang (University of Exeter),
Wenjie Ruan (University of Exeter)}
%
\authorrunning{Z. Wang et al.}
%

\institute{University of Exeter, Exeter EX4 4PY, UK\\
\email{\{zw360, W.Ruan\}@exeter.ac.uk}}

\toctitle{Understanding Adversarial Robustness of Vision Transformers via Cauchy Problem}
  
%
\maketitle              
%
\section*{Appendix A. Detailed Proofs}
\subsection*{Mean Value Inequality}
Let $D \subseteq \mathbb{R}^n$, and $\bm{f}: D \to \mathbb{R}^m$ which has partial derivatives on $D$ for each component. Hence $\forall \bm{a}, \bm{b} \in D$, there exists $\bm{\xi}$ on the line defined by $\bm{a}$ and $\bm{b}$, such that 
\begin{equation*}
    \lVert \bm{f}(\bm{b}) - \bm{f}(\bm{a})\rVert_2 \leq \lVert J_{\bm{f}}(\bm{\xi})\rVert_2 \lVert \bm{b} - \bm{a}\rVert_2
\end{equation*}
\begin{proof}
    Let 
\begin{equation*}
    \bm{r}(t) = \bm{\bm{a}} + t(\bm{b}-\bm{a}), t\in [0, 1]  
\end{equation*}
and 
\begin{equation*}
    \bm{g}(t) = \bm{f}\circ \bm{r}(t), t\in [0, 1]    
\end{equation*}
hence we got the mapping $\bm{g}: [0, 1] \to \mathbb{R}^m$, then define 
\begin{equation*}
    \phi(t) = \bigg(\bm{g}(1)-\bm{g}(0)\bigg)^T\bm{g}(t), t \in [0, 1]
\end{equation*}
since $\bm{f}$ is differentiable, $\phi$ is also differentiable. By applying \textit{mean value theorem}, there exists $\tau \in [0, 1]$ such that 

\begin{equation*}
    \phi(1) - \phi(0) = \phi(\tau)^{'} = \bigg(\bm{g}(1)-\bm{g}(0)\bigg)^TJ_{\bm{g}}(\tau),
\end{equation*}
because 
\begin{equation*}
    \phi(1) - \phi(0) = \lVert \bm{g}(1) - \bm{g}(0) \rVert_2^2,
\end{equation*}
we have 
\begin{equation*}
\begin{split}
    \lVert \bm{g}(1) - \bm{g}(0) \rVert_2^2 = \bigg(\bm{g}(1) - \bm{g}(0)\bigg)^TJ_{\bm{g}}(\tau),    
\end{split}
\end{equation*}
by applying Cauchy-Schwarz inquality, we have 
\begin{equation*}
\begin{split}
    \lVert \bm{g}(1) - \bm{g}(0) \rVert_2^2 &\leq \lVert \bm{g}(1) - \bm{g}(0) \rVert_2 \lVert J_{\bm{g}}(\tau)\rVert_2   \\ 
    \lVert \bm{g}(1) - \bm{g}(0) \rVert_2 &\leq \lVert J_{\bm{g}}(\tau)\rVert_2,   \\ 
\end{split}
\end{equation*}
Consider the derivative of $g$ w.r.t. $t$, that is  
\begin{equation*}
    J_{\bm{g}}(t) = J_{\bm{f}}(\bm{r}(t))J_{\bm{r}}(t) = J_{\bm{f}}(\bm{r}(t))(\bm{b}-\bm{a}),
\end{equation*}
therefore
\begin{equation*}
\begin{split}
    \lVert \bm{g}(1) - \bm{g}(0) \rVert_2 &\leq \lVert J_{\bm{g}}(\tau) \rVert_2 \\
    \lVert \bm{f}(b) - \bm{f}(a) \rVert_2 &\leq \lVert J_{\bm{f}}(\bm{r}(t))(\bm{b}-\bm{a}) \rVert_2 \\
                                &\leq \lVert J_{\bm{f}}(\bm{r}(t)) \rVert_2 \lVert \bm{b}-\bm{a} \rVert_2,
\end{split}
\end{equation*}
where $\bm{\xi} = \bm{r}(\tau)$ is the point on the line defined by $\bm{a}, \bm{b}$, and $\lVert J_{\bm{f}}(\bm{r}(t)) \rVert_2$ is the $L_2$ induced norm for the Jacobian. 
\end{proof}

\subsection*{The Computation of Jacobian}

Since the derivatives of the vectors even matrix are frequently considered, and for simplicity concerns, we take all vectors as vertical vectors, i.e., size of $n\times 1$ and denote the Jacobian of the function $\bm{f}:\mathbb{R}^{n} \to \mathbb{R}^{m}$ as 
$$\frac{\partial \bm{f(x)}}{\partial \bm{x}^T}$$, 
hence for $\bm{g}:\mathbb{R}^{m} \to \mathbb{R}^{s}$, we have 
$$
J_{\bm{f} \circ \bm{g}} = \frac{\partial \bm{f(x)}}{\partial \bm{g (x)}^T} \frac{\partial \bm{g(x)}}{\partial \bm{x}^T}.
$$  

\subsection*{The Jacobian of Non-linear Part for MSA (Special Case)} 

Consider the mapping $\bm{f}: \mathbb{R}^{N} \to \mathbb{R}^{N}$ and let 

\begin{equation*}
        \bm{f}(\bm{x}) = softmax(a\bm{xx}^T)\bm{x} = \bm{Px} =  \begin{pmatrix}
            \vdots \\
            \bm{p}(ax_i\bm{x}^T)\bm{x} \\
            \vdots
        \end{pmatrix}, \\    
\end{equation*}
therefore the Jacobian of $f$ w.r.t. $\bm{x}$ is
\begin{equation*}
    \begin{split}
        \frac{\partial \bm{f}}{\partial \bm{x}^T} &= \begin{pmatrix}
            \vdots \\
            \frac{\partial \bm{p}(ax_i\bm{x}^T)\bm{x}}{\bm{x}^T} \\
            \vdots
        \end{pmatrix} \\
        &= \begin{pmatrix}
            \vdots \\
            a\bm{x}^T\frac{\partial \bm{p}(\bm{u}_i)}{\partial \bm{u}_i^T}(x_i\bm{I} + \bm{x}\bm{e}_i^T) + \bm{p}(\bm{u}_i^T)\\
            \vdots
        \end{pmatrix} \\
        &= \begin{pmatrix}
            \vdots \\
            a\bm{x}^T(diag(\bm{p}(\bm{u}_i))-\bm{p}(\bm{u}_i)\bm{p}(\bm{u}_i^T))(x_i\bm{I} + \bm{x}\bm{e}_i^T)\\
            \vdots
        \end{pmatrix} + \bm{P} 
    \end{split},
\end{equation*}

where $\bm{p}(ax_i\bm{x}^T) = \bm{p}(\bm{u}_i^T)$ and $\bm{e}_i^T = (0,...,1,...0)$ which is a row vector with $i_{th}$ entry of $1$. Since each row of $\bm{P}$, i.e., $\bm{p}(\bm{u}_i^T)$, has all positive entries and $\sum\limits_{i=1}^{N}p_i(\bm{u}_i^T) = 1$, we regard it as one set of probability for $x_i, i=1,...,N$. Hence there are $N$ sets of probability for elements of $\bm{x}$. We define the \textit{mean} of each corresponding set of probability as  

\begin{equation*}
    \mu_i = \sum\limits_{i=1}^{N}p_i(\bm{u}_i^T)x_i, \quad i=1,2,...,N
\end{equation*}

naturally we have $\bm{\mu} = \bm{Px}$ and the \textit{variance} is defined as 

\begin{equation*}
\bm{\sigma}^2 = \begin{pmatrix}
                \sum_{k=1}^{N}p_k(\bm{u}_1)x_k^2 - \big(\sum_{k=1}^{N}p_k(\bm{u}_1)x_k\big)^2 \\
                \vdots \\
                \sum_{k=1}^{N}p_k(\bm{u}_N)x_k^2 - \big(\sum_{k=1}^{N}p_k(\bm{u}_N)x_k\big)^2 \\
\end{pmatrix} = \begin{pmatrix}
                \sigma_1^2 \\
                \vdots \\
                \sigma_N^2
\end{pmatrix}.    
\end{equation*}

Hence, 

\begin{equation*}
    \begin{split}
        &\begin{pmatrix}
        \vdots \\
        a\bm{x}^T(diag(\bm{p}(\bm{u}_i))-\bm{p}(\bm{u}_i)\bm{p}(\bm{u}_i^T))(x_i\bm{I} + \bm{x}\bm{e}_i^T)\\
        \vdots
        \end{pmatrix} \\
        &= a\begin{pmatrix}
        \vdots \\
        x_i\bm{x}^Tdiag(\bm{p}(\bm{u}_i)) + \bm{x}^Tdiag(\bm{p}(\bm{u}_i))\bm{x}\bm{e}_i^T -x_i\bm{x}^T\bm{p}(\bm{u}_i)\bm{p}(\bm{u}_i^T) - \bm{x}^T\bm{p}(\bm{u}_i)\bm{p}(\bm{u}_i^T)\bm{x}\bm{e}_i^T \\
        \vdots
        \end{pmatrix}\\
    \end{split}
\end{equation*}

where 
\begin{equation*}
x_i\bm{x}^Tdiag(\bm{p}(\bm{u}_i)) = \begin{pmatrix}
x_ix_1p_1(\bm{u}_i) & \cdots & x_ix_Np_N(\bm{u}_i)
\end{pmatrix}
\end{equation*}

\begin{equation*}
\bm{x}^Tdiag(\bm{p}(\bm{u}_i))\bm{x}\bm{e}_i^T = \begin{pmatrix}
0 & \cdots & \sum\limits_{k=1}^{N}p_k(\bm{u}_i)x_k^2 & \cdots & 0
\end{pmatrix}
\end{equation*}

\begin{equation*}
x_i\bm{x}^T\bm{p}(\bm{u}_i)\bm{p}(\bm{u}_i^T) = \begin{pmatrix}
x_i\mu_ip_1(\bm{u}_i^T) & \cdots & x_i\mu_ip_N(\bm{u}_i^T)
\end{pmatrix}
\end{equation*}

\begin{equation*}
\bm{x}^T\bm{p}(\bm{u}_i)\bm{p}(\bm{u}_i^T)\bm{x}\bm{e}_i^T = \begin{pmatrix}
0 & \cdots & \mu_i^2 &\cdots & 0
\end{pmatrix}.
\end{equation*}

Hence we have

\begin{equation*}
    \begin{split}
            \frac{\partial \bm{f}}{\partial \bm{x}^T} &=a\begin{pmatrix}
            & & \vdots\\
            (x_1-\mu_i)p_1(\bm{u}_i)x_i & \cdots & (x_i-\mu_i)p_i(\bm{u}_i)x_i & \cdots & (x_N-\mu_i)p_N(\bm{u}_i)x_i \\
            & & \vdots \\
            \end{pmatrix} \\
            &+ adiag(\bm{\sigma}^2) + \bm{P} \\
        &=a\big\{diag(\bm{x})Pdiag(\bm{x}) - Pdiag(\bm{x})diag(\bm{\mu}) + diag(\bm{\sigma}^2) \big\} + \bm{P}
    \end{split}
\end{equation*}

\subsection*{The Jacobian of non-linear part for MSA (General Case)} 

Define the mapping $f: \mathbf{R}^{N\times D} \to \mathbf{R}^{N\times D}$ and let 

\begin{equation*}
    f(X) = softmax(XA^TX^T)X.    
\end{equation*}

Since $f(X)$ is a matrix, each row of $f(X)$ can be defined as 
    
\begin{equation*}
      f_i(X)^T = f_{i,\cdot}(X) = softmax(x^T_iA^TX^T)X.  
\end{equation*}

Therefore 

\begin{equation*}
    \begin{split}
    f_i(X) &= X^Tsoftmax(XAx_i)  \\
    &= X^Tsoftmax(u_i)  \\
    &= X^Tp(u_i) \\
    &= \sum_{k=1}^{N}x_kp_k(u_i), 
    \end{split}
\end{equation*}
    
where 
\begin{equation*}
u_i = XAx_i = X_{N\times D}A_{D\times D}x_{i, D\times 1} = 
\begin{pmatrix}
x_1^TAx_i \\
x_2^TAx_i \\
\vdots \\
x_N^TAx_i \\
\end{pmatrix} \in \mathbf{R}^{N\times 1}
\end{equation*}
and $p: \mathbf{R}^N\to \mathbf{R}^N$ denotes the function of softmax. Then the Jacobian of $f_i(X)$ w.r.t. $x_j$ is 

\begin{equation*}
\begin{split}
     \frac{\partial f_i}{ \partial x_j^T} &= \sum_{k=1}^{N}\frac{\partial x_kp_k(u_i)}{ \partial x_j^T} \\
     &= \sum_{k=1}^{N}x_k \frac{\partial p_k(u_i)}{\partial x_j^T} + E_{D\times D}p_j(u_i) \\
     &= \sum_{k=1}^{N}x_k \frac{\partial p_k(u_i)}{\partial u_i^T}\frac{\partial u_i}{x_j^T} + E_{D\times D}p_j(u_i) \\
     &= \sum_{k=1}^{N}x_k \frac{\partial p_k(u_i)}{\partial u_i^T}\frac{\partial XAx_i}{x_j^T} + E_{D\times D}p_j(u_i) \\
     &= \sum_{k=1}^{N}x_k \frac{\partial p_k(u_i)}{\partial u_i^T}\frac{\partial XAx_i}{x_j^T} + E_{D\times D}p_j(u_i) \\
     &= X^T\frac{\partial p(u_i)}{\partial u_i^T} \frac{\partial XAx_i}{x_j^T} + EP_{ij},
\end{split}
\end{equation*}
where 
\begin{equation*}
\frac{\partial XAx_i}{x_j^T} = \frac{\partial \begin{pmatrix}
x_1^TAx_i \\
x_2^TAx_i \\
\vdots \\ 
x_N^TAx_i \\
\end{pmatrix}}{\partial x_j^T},    
\end{equation*} 
and $P_{ij}$ is the ${ij}_{th}$ element of $softmax(XA^TX^T)$. In order to have the gradient above, consider following

\begin{equation*}
\begin{split}
     \frac{\partial x^TAx}{\partial x^T} &= x^T(A^T + A)  \\
     \frac{\partial y^TA x}{\partial x^T} &= y^TA \\
     \frac{\partial x^TA y}{\partial x^T} &= y^TA^T     
\end{split}
\end{equation*}
Hence,
\begin{equation*}
\frac{\partial XAx_i}{x_j^T} = \begin{pmatrix}
x_1^T  \\
x_2^T  \\
\vdots \\
x_j^T  \\
\vdots \\
x_D^T 
\end{pmatrix}A\delta_{ij} + \begin{pmatrix}
0_D  \\
0_D  \\
\vdots \\
x_i^T \\
\vdots \\
0_D
\end{pmatrix}A^T = XA\delta_{ij} + E_{ji, N \times N}XA^T
\end{equation*}.
In a nut, 

\begin{equation*}
    \frac{\partial f_i}{ \partial x_j^T} =  X^T\frac{\partial p(u_i)}{\partial u_i^T}\bigg[XA\delta_{ij} + E_{ji, N \times N}XA^T\bigg] + E_{D\times D}P_{ij},
\end{equation*}

where $\frac{\partial p(u)}{\partial u^T} = diag(p(u))-p(u)p(u)^T$ and the Jacobian of $f$ w.r.t. input $X$ is 
\begin{equation*}
     Jf(X) = \bigg( \frac{\partial f_i}{ \partial x_j^T} \bigg)_{ND \times ND}  \\
\end{equation*}
It is obviously that each component in equation above are bounded since inputs are bounded. We can conclude that the \textit{Frobenius norm} of The Jacobian is bounded. Since all the matrix norms are equvilent in finite dimension. The Jacobian is also bounded with $L_2$ induced norm. 

\section*{Appendix B. Configuration of Models and Training Result}

\begin{table}[!htbp]
\caption{Summary of Configuration and Training Results. The models are grouped based on their parameter size and the depth of the model. The models within the same group have the same depth and similar parameter size. We have 4 groups corresponds to the depth of 1, 4, 8 and higher. It is obvious that training results of models within the same group have similar performance.}
\label{tab:config_train}
\vskip 0.15in
\begin{center}
\begin{small}
\begin{sc}

\begin{tabular}{l|l|l|l|l|l|l}
\toprule
          & \#Head & Depth & Patch & Par.(MB) & Train Acc.(Loss) & Test Acc.(Loss)\\
\midrule
ViT-$S_1$ & 1     &1      & $16 \times 16$ & 13.55 & 0.698 (0.860) & 0.676 (0.918) \\
ViT-$S_2$ & 4     &1      & $16 \times 16$ & 13.55 & 0.768 (0.662) & 0.739 (0.745)\\
CoViT-$S_1$ & K3             &1& $16 \times 16$ & 13.55 & 0.754 (0.706) & 0.734 (0.756)  \\
CoViT-$S_2$ & 4 $\times$ k3  &1& $16 \times 16$ & 11.30 & 0.756 (0.703) & 0.737 (0.754)\\
\midrule
ViT-$S_3$ & 1     &4      & $16 \times 16$ & 49.63 & 0.976 (0.078) & 0.847 (0.525) \\
ViT-$S_4$ & 4     &4      & $16 \times 16$ & 49.63 & 0.996 (0.018) & 0.863 (0.571)\\
CoViT-$S_3$ & k3             &4& $16 \times 16$ & 49.61 & 0.968 (0.105) & 0.882 (0.384) \\
CoViT-$S_4$ & 4 $\times$ k3  &4& $16 \times 16$ & 40.61 & 0.970 (0.094) & 0.876 (0.405) \\
CoViT-$S_5$ & 4 $\times$ k7  &4& $16 \times 16$ & 44.61 & 0.992 (0.032) & 0.868 (0.491) \\
\midrule
ViT-$M_1$ & 1     &8      & $16 \times 16$ & 97.73 & 0.999 (0.004) & 0.877 (0.514) \\
ViT-$M_2$ & 4     &8      & $16 \times 16$ & 97.73 & 0.999 (0.002) & 0.861 (0.641)\\
ViT-$M_3$ & 4     &8      & $32 \times 32$ & 102.23& 0.997 (0.013) & 0.853 (0.635) \\
CoViT-$M_1$ & k3             &8& $16 \times 16$ & 97.70 & 0.986 (0.054) & 0.881 (0.407) \\
CoViT-$M_2$ & k1357          &8& $16 \times 16$ & 81.70 & 0.996 (0.021) & 0.882 (0.471) \\
CoViT-$M_3$ & 4 $\times$ k3  &8& $16 \times 16$ & 79.70 & 0.973 (0.081) & 0.870 (0.431)\\
CoViT-$M_4$ & 4 $\times$ k7  &8& $16 \times 16$ & 87.70 & 0.997 (0.015) & 0.875 (0.513)\\
CoViT-$M_5$ & 4 $\times$ k3  &8& $32 \times 32$ & 84.20 & 0.965 (0.108)& 0.861 (0.459) \\
\midrule
ViT-$L$ & 8     &12      & $32 \times 32$ & 150.33 & 0.999 (0.003) & 0.848 (0.681) \\
\multirow{2}{*}{CoViT-$L_1$} & 4$\times$ k3 &\multirow{2}{*}{12} & \multirow{2}{*}{$32 \times 32$} & \multirow{2}{*}{120.28} & \multirow{2}{*}{0.979 (0.068)} & \multirow{2}{*}{0.867 (0.467)} \\
                            & 4$\times$ k5 &                    &  & & &\\
\multirow{2}{*}{CoViT-$L_2$} & 4$\times$ k3 &\multirow{2}{*}{16} & \multirow{2}{*}{$32 \times 32$} & \multirow{2}{*}{222.03} & \multirow{2}{*}{0.954 (0.130)} & \multirow{2}{*}{0.853 (0.497)} \\
                            & 4$\times$ k5 &                    &  & & &\\
\bottomrule
\end{tabular}
\end{sc}
\end{small}
\end{center}
\vskip -0.1in
\end{table}

As is shown in table \ref{tab:config_train}, all the large models are divided into 3 groups suffixed as $S_i$, $M_i$, $L_i$, standing for 'Small', 'Medium' and 'Large', which correspond to their parameter size represented by \textit{Par.} in both tables, and $i$ is different for ViT and CoViT, denoting the parallel model of similar size. It is worth noting that, in table \ref{tab:config_train}, the parameters to be trained will reduce as the number of heads increases due to the way how 1-D convolution layers are concatenated. For example, the parameter size for CoViT-$S_1$ is $512\times 512 \times 3 + 512$ and that for CoViT-$S_2$ is $4 \times (128 \times 128 \times 3 + 128)$ which is much smaller. 

For both ViTs and CoViTs, the performance of models within the same group are similar to each other and the shallow networks, e.g., ViT-$S_1$, ViT-$S_2$, CoViT-$S_1$, CoViT-$S_2$, are harder to train and may need extra training to be converged. This may due to the optimizer used, i.e., SAM, since SAM will try to find shallow-wide optima instead of deep-narrow one, which requires stronger model capacity. The reason of use SAM as optimizer instead of Adam is due to sharp loss surface for ViTs, which will result in large generalization gap if not pretrained on larger dataset. For a better comparison, both ViTs and CoViTs are trained by SAM with the same hyper-parameter for 150 epochs of which the training result are reported.